\documentclass[11pt]{article}

\usepackage[preprint]{acl}

\usepackage{times}
\usepackage{latexsym}
\usepackage{tablefootnote}

\usepackage[T1]{fontenc}

\usepackage[utf8]{inputenc}

\usepackage{microtype}

\usepackage{inconsolata}

\usepackage{graphicx}

\usepackage{booktabs}
\usepackage{pifont}

\newcommand{\cmark}{\ding{51}}
\newcommand{\xmark}{\ding{55}}

\usepackage{amsmath}
\usepackage{amssymb}
\usepackage{mathtools}
\usepackage{amsthm}
\usepackage{booktabs}
\usepackage{multirow}
\usepackage[table]{xcolor} 
\usepackage{colortbl}       

\usepackage{microtype}
\usepackage{subcaption}
\usepackage{booktabs} 

\theoremstyle{plain}
\newtheorem{theorem}{Theorem}[section]

\theoremstyle{definition}

\theoremstyle{remark}

\usepackage{algorithm}
\usepackage{algorithmic}
\usepackage{tabularx}

\title{Hybrid Open-Ended Tri-Evolution Makes Better Deep Researcher}

\author{
\bfseries
Hongming Piao$^{1}$\thanks{Equal contribution} \quad
Chi Liu$^{1}$\footnotemark[1] \quad
Mengzhuo Chen$^{1}$ \quad
Yan Shu$^{1}$ \quad Xidong Wang$^{1}$ \\
\bfseries Derek Li$^{1}$ \quad Ying Wei$^{2}$ \quad
Bryan Dai$^{1}$\thanks{Corresponding author}
 \\
$^{1}$IQuest Research \quad $^{2}$Zhejiang University \\
\texttt{\{cxiao, cliu04, cbdai\}@iquestlab.com} \\
\href{https://github.com/IQuestLab/ote}{\texttt{https://github.com/IQuestLab/ote}}
}

\begin{document}
\maketitle
\begin{abstract}
  Deep research and agent evolution serve as de-facto tasks for AI agents in real-world applications toward artificial general intelligence. The former enables autonomous retrieval and integration of information in open-ended environments to tackle open-ended research tasks, yet it is constrained by the static parametric deep research capabilities of agent systems. The latter allows agents to autonomously interact with the environment to gain experiences that evolve model capabilities. However, its effectiveness has been widely validated only on verifiable tasks with standard answers, leaving a gap with open-ended research tasks. To bridge these two critical tasks, we propose the Hybrid Open-Ended Tri-Evolution (HOTE) framework, which leverages hybrid-mode reinforcement learning to facilitate the collaborative evolution of a proposer, solver and judge based on web-scale knowledge, moving toward autonomous evolving agents in open-ended tasks and environments. Extensive experiments on three long-form deep research benchmarks demonstrate that the 8B model trained via HOTE surpasses the strongest static open 8-32B models as well as those trained by state-of-the-art deep research training methods with less time overhead, and further verify that the evolution of all three modules in HOTE is indispensable.
\end{abstract}

\section{Introduction}
\label{sec:introduction}




Deep research, which emphasizes autonomous handling of open-ended, long-cycle, and highly complex information retrieval and integration, has become a de-facto task for AI agents in real-world applications and a step toward artificial general intelligence~\cite{hu2025step,openai2025deepresearch}. Closed-source proprietary systems such as OpenAI Deep Research~\cite{openai2025deepresearch}, Claude Research~\cite{antrophic2025deepresearch}, Kimi-Researcher~\cite{kimi2025deepresearch}, and Grok DeepSearch~\cite{xai2025deepresearch} have demonstrated near-human research capabilities. Meanwhile, the open-source community has also made significant progress in building more comprehensive research workflows and end-to-end training of deep researchers capable of autonomously planning workflows~\cite{schmidgall2025agent,li2025websailor,jin2025search,team2025tongyi,shao2025dr,fang2025cognitive,zheng2025deepresearcher,wu2025webdancer,yao2026researcher,song2025r1}.

Although deep researchers can tackle highly complex research questions by autonomously seeking web-scale knowledge, their parameterized research capabilities are upper bounded by fixed training sets and training strategies. Autonomous interaction with the environment and evolution through experience are regarded as a path toward artificial general intelligence~\cite{liu2025spice}, with self-play offering a promising paradigm for agent evolution, where an agent system learns from feedback acquired through competition with itself. For example, \citet{huang2025r,zhao2025absolute,wang2025cure,chen2025self} proposed agent systems that act as both query proposer and solver, achieving significant results beyond the original training set or even in zero-data scenarios in domains such as mathematics, coding, or general reasoning. To address the limitation that such evolution is constrained by the agent system’s own knowledge, SPICE~\cite{liu2025spice} and Dr. Zero~\cite{yue2026dr} equipped the proposer with a pretrained-scale corpus~\cite{mahabadi2025nemotron,yuan2025naturalreasoning} and a search engine, respectively, taking a step forward toward evolution in open-ended environments. However, they remain limited to tasks that can be verified with deterministic answers. Given that deep researchers in real-world applications often face long-form report generation tasks without clear standard answers~\cite{shao2025dr}, constructing an agent evolution framework for open-ended tasks and environments is crucial.


\begin{figure*}[t]
\centering
\includegraphics[width=0.98\textwidth]{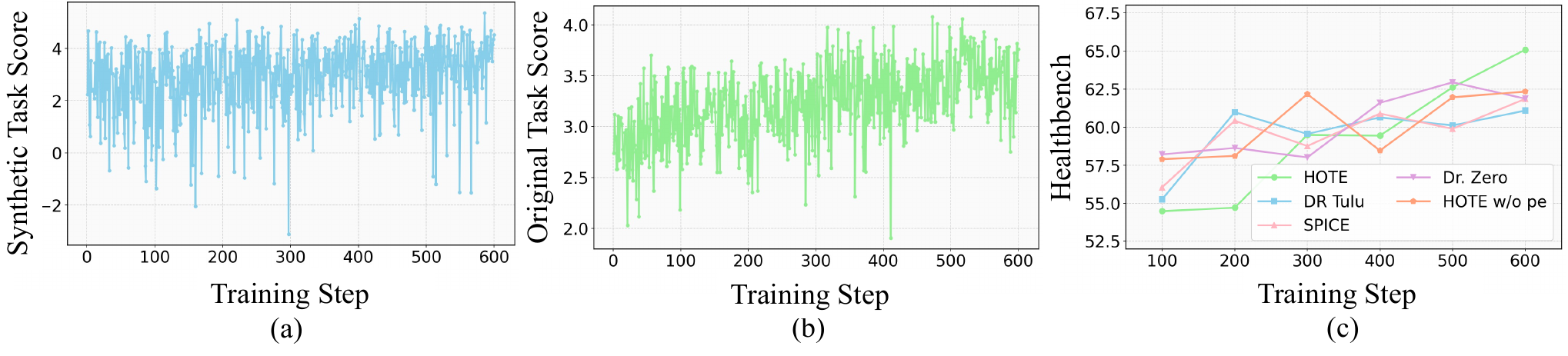}
\caption{During the training of HOTE, (a) the scores for synthetic research tasks remain at the same level; (b) the scores for research tasks from the original training set continuously increase; (c) the scores on Healthbench surpass the baselines and maintain an upward trend.}
\label{fig:intro}
\end{figure*}

To fill the aforementioned gaps, we propose the Hybrid Open-ended Tri-Evolution (HOTE) framework, which consists of three co-evolving modules: proposer, solver, and judge. The solver is responsible for receiving a query, generating a research plan, conducting multi-turn information seeking, integrating information and producing a referenced research report. The judge is responsible for dynamically generating rubrics~\cite{gunjal2025rubrics,viswanathan2025checklists,shao2025dr} that capture the strengths and weaknesses of the solver by comparing multiple solver responses sampled for the same query, and providing rewards for the responses based on these rubrics, thereby removing the dependency on verifiable answers. The proposer is responsible for performing information seeking based on the model weaknesses identified by the judge and proposing challenging yet learnable queries. HOTE uses GRPO~\cite{shao2024deepseekmath} to encourage a game between the solver and proposer, continuously improving response quality and query difficulty. Simultaneously, it employs the judge to dynamically evolve evaluation rubrics, preventing reward hacking, maintaining the learnability of difficult queries, and enabling the proposer to uncover the solver’s weaknesses. Additionally, we propose a dual-mode hybrid training strategy that includes both tool-use and no-tool modes, which achieves mutual benefit between the two modes and significantly improves training efficiency. HOTE effectively maintains the difficulty of synthetic queries during training (Figure~\ref{fig:intro}(a-b)) and outperforms approaches using only the original training set within the same number of training steps (Figure~\ref{fig:intro}(c)). As shown in Figure~\ref{fig:reli}(a), HOTE also facilitates the collaborative progress of both the no-tool and tool-use modes.

In conclusion, our contributions are as follows: 
\begin{itemize}
\item We propose Hybrid Open-ended Tri-Evolution (HOTE), the first deep researcher evolution framework designed for open-ended environments and open-ended tasks, bridging two paths toward artificial general intelligence: deep research and agent evolution. 

\item We design a co-evolution strategy for proposer, solver and judge based on reinforcement learning with hybrid modes. The strategy maintains the challenge and learnability of research tasks for the solver while avoiding reward hacking and achieving the mutual benefit between tool-use mode and no-tool mode.

\item Experimental results on three long-form deep research benchmarks demonstrate that an 8B model trained with HOTE outperforms the strongest open 8-32B models and state-of-the-art deep research training methods with less time overhead, with the co-evolution of all three modules being indispensable.
\end{itemize}

\section{Method}
\label{sec:method}

\subsection{Problem Formulation}

\begin{figure}[t]
\centering
\includegraphics[width=0.49 \textwidth,height=0.07\textheight]{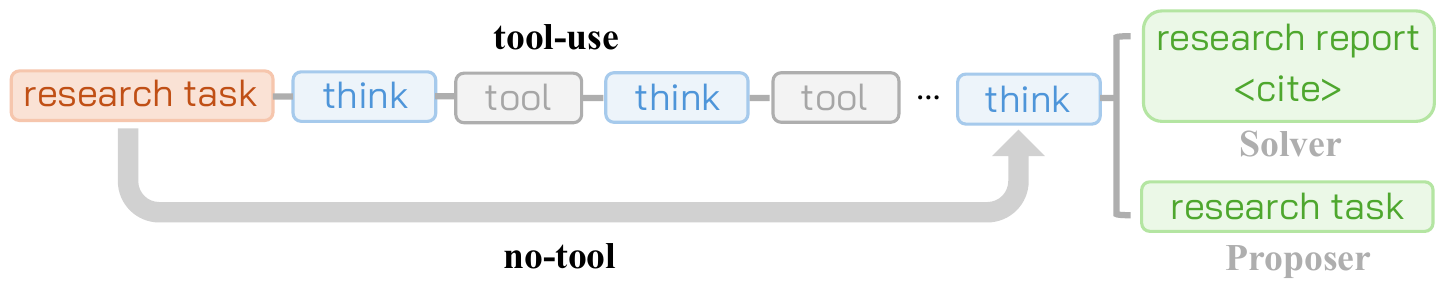}
\caption{The inference paradigm of the solver and the proposer under tool-use and no-tool modes in HOTE.}
\label{fig:problem}
\end{figure}


\begin{figure*}[t]
\centering
\includegraphics[width=0.90 \textwidth]{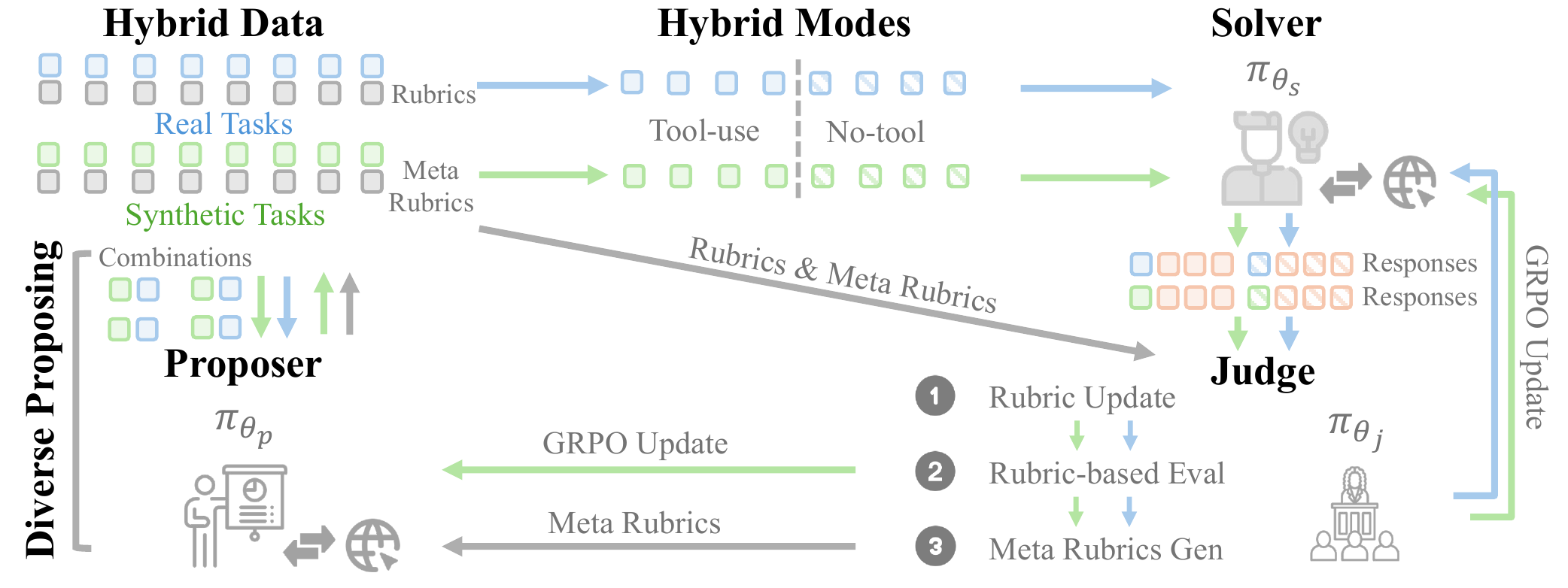}
\caption{The overall training framework of HOTE. At each training step, we utilize hybrid data consisting of both \textcolor[RGB]{78, 149, 217}{real tasks} and \textcolor[RGB]{78, 167, 46}{synthetic tasks} with their corresponding persistent rubrics. Half of the tasks are configured in \textbf{tool‑use} mode and the other half in \textbf{no‑tool} mode. The \textbf{Solver} generates responses in hybrid mode. Based on each task’s existing rubrics and the generated responses, the \textbf{Judge} updates the rubrics, evaluates the responses and generates meta rubrics. The assessment generated by the \textbf{Judge} is used to update the \textbf{Solver}, while the portion corresponding to synthetic tasks is used to update the \textbf{Proposer}. The \textbf{Proposer} performs diverse proposing according to the meta rubrics and different combinations of tasks from the previous step, thereby generating synthetic tasks which use the meta rubrics as persistent rubrics for the next step.}
\label{fig:method}
\end{figure*}


Following~\cite{li2025websailor}, we build our agent on top of a concise and general ReAct framework, which provides a clear baseline for evaluating the model’s intrinsic capabilities and training strategies. The deep research model is a language model (LM) augmented with search tools. Each tool accepts a query along with its arguments and returns textual resources that can be cited in the model’s final answer. Formally, let $\mathcal{T} = \{T_1, T_2, \ldots\}$ represent the set of available tools. 
Each tool $T_k$ accepts a query $q$ together with an optional argument string $\alpha$, 
and returns an observation $o = T_k(q; \alpha)$. The model follows a policy $\pi_\theta$, parameterized by $\theta$, which generates a 
sequence of text $s$ autoregressively. The sequence is initialized as 
$s_0 = x$, where $x$ contains the system prompt and the task description. The model’s action space is defined as
\[
\{\texttt{think}, \texttt{tool}, \texttt{answer}, \texttt{cite}\},
\]
with each action associated with a corresponding protocol token. \texttt{think} (\texttt{<think>$...$</think>}) leverages the language model’s internal reasoning capability to plan subsequent steps based on the current state and available information. \texttt{tool} (\texttt{<call\_tool name=$...$>$...$</call\_tool>}) triggers the invocation of one of several search-related tools. The specific tool is selected via the \texttt{name} attribute, together with tool-dependent arguments omitted here. The textual output produced by the tool is appended to the context for use in later steps. \texttt{answer} (\texttt{<answer>$...$</answer>}) generates the final response and terminates the interaction. \texttt{cite} (\texttt{<cite id=$...$>$...$</cite>}) is embedded within the final answer to annotate claims with citation tags that reference supporting sources.

At each step $i$, the model samples both an action $a_i$ and its associated content or arguments $\zeta_i$, $(a_i, \zeta_i) \sim \pi_\theta(\cdot \mid s_i)$. If $a_i \in \{\texttt{think}, \texttt{answer}, \texttt{cite}\}$, the generated output $\zeta_i$
is appended to the context, yielding $s_{i+1} = s_i \oplus \langle a_i, \zeta_i \rangle$. If $a_i = \texttt{tool}$, the model executes the corresponding tool call, receives the
observation $o_i = T_k(q_i; \alpha_i)$, where $\zeta_i=(q_i, \alpha_i)$, and updates the state as $s_{i+1} = s_i \oplus \langle a_i, \zeta_i, o_i \rangle$. This iterative procedure continues until $a_\tau = \texttt{answer}$, at which point
$\zeta_\tau$ contains the final answer. As shown in Figure~\ref{fig:problem}, within HOTE, both the proposer and the solver perform inference under the same paradigm described above. \textbf{Formulating challenging research tasks for the solver constitutes a research task for the proposer itself}. The key difference is that the proposer does not include the \texttt{cite} action, since proposing research tasks does not require the presentation of citations. In HOTE, both the proposer and the solver operate under two modes: tool-use and no-tool. In the tool-use mode, the model follows the aforementioned inference paradigm. In the no-tool mode, after receiving the initial state $s_0$, the model performs a single \texttt{think} action and then directly produces an \texttt{answer} action. All inference paradigms described above can be controlled through the system prompt. 


\begin{figure*}[t]
\centering
\includegraphics[width=0.98\textwidth]{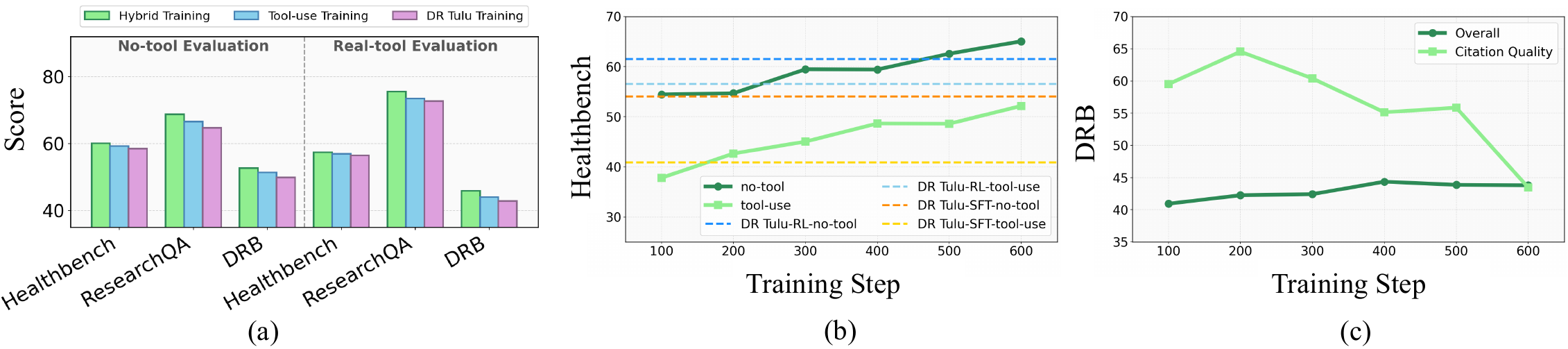}
\caption{(a) The hybrid mode of HOTE outperforms the tool-use mode of HOTE as well as DR Tulu in both no-tool and tool-use modes; (b) Models trained with no-tool mode HOTE and DR Tulu evaluated in no-tool mode on Healthbench achieve higher scores than when evaluated in tool-use mode; (c) When trained with HOTE in no-tool mode, the scores on DRB under tool-use mode decrease after a certain number of steps.}
\label{fig:reli}
\end{figure*}

\subsection{Hybrid Open-ended Tri-evolution}

HOTE is primarily divided into four parts: \textbf{Solver Evolution}, \textbf{Judge Evolution}, \textbf{Proposer Evolution} and \textbf{Dual-mode Hybrid Training Strategy}. Please refer to Figure~\ref{fig:method} and Algorithm~\ref{alg:hote_training} for the overall framework and training pipeline.

\textbf{Solver Evolution}. 
The solver $\pi_{\theta_{s}}$ takes a research task $s_0$ as input and, after performing \texttt{think}-\texttt{tool} interleaved reasoning, generates a long-form research report \texttt{answer} with \texttt{cite} represented by $o$. Thus, the objective of solver evolution is to make the answer better align with the research report requirements $r$, which also serves as the reward in the reinforcement learning and will be further discussed in the judge evolution section. We utilize GRPO~\cite{shao2024deepseekmath} with token-level loss aggregation~\cite{yu2025dapo} to achieve solver evolution, with the goal of:

\begin{equation}
\begin{aligned}
\mathcal{J}_{\text{GRPO}}&(\theta_s)
= \mathbb{E}_{(s_0,\mathcal{R}_{s_0})\sim\mathcal{D},\ \{o_i\}_{i=1}^G \sim \pi_{\theta_s^{\text{old}}}(\cdot \mid s_0)} \\
&\quad \Bigg[\frac{1}{\textstyle \sum_{i=1}^{G}|o_i|}\sum_{i=1}^G
\sum_{t=1}^{|o_i|}
\Big(
\min\big(
r_{i,t}(\theta_s)\,\hat{A}_{i,t}, \\
&\hspace{-2.7em} \mathrm{clip}\!\left(r_{i,t}(\theta_s),\epsilon\right)\,
\hat{A}_{i,t}
\big)
- \beta\,D_{\mathrm{KL}}\!\left(\pi_{\theta_s} \,\|\, \pi_{\theta_s^{\text{ref}}}\right)
\Big)
\Bigg],\\
&\hspace{-2.7em}\text{where}\quad r_{i,t}(\theta)
=
\frac{
\pi_\theta\!\left(o_{i,t}\mid q, o_{i,<t}\right)
}{
\pi_{\theta_{\text{old}}}\!\left(o_{i,t}\mid q, o_{i,<t}\right)
},\\
&\hat{A}_{i,t}
=
\frac{
r_i - \operatorname{mean}\!\left(\{r_i\}_{i=1}^G\right)
}{
\operatorname{std}\!\left(\{r_i\}_{i=1}^G\right)
}.
\end{aligned}
\label{eq:grpo}
\end{equation}
$\{o_i\}_{i=1}^G$ represents a group of responses to the research task $s_0$. $r_i$ denotes the reward obtained by $o_i$. $\mathcal{R}_{s_0}$ represents the rubric set corresponding to $s_0$, which will be explained in detail in the judge evaluation section. The solver continuously progresses toward better long-form research reports based on the reward. We omit the descriptions of other symbols that can be found in~\citet{yu2025dapo}.

\textbf{Judge Evolution}. 
The judge $\pi_{\theta_{j}}$ receives a group of responses $\{o_i\}_{i=1}^G$ for $s_0$ from the solver as input and assigns reward $r_i$ to each response $o_i$ according to the rubric set $\mathcal{R}_{s_0}$ as follows:

\begin{align}
r_i=\frac{\sum_{(R,w)\in \mathcal{R}_{s_0}}w\cdot \text{Judge}_{\pi_{\theta_{j}}}(o_i,R)}{\sum_{(R,w)\in \mathcal{R}_{s_0}}|w|},  
\label{eq:rubric}
\end{align}

where $R$ represents a rubric in $\mathcal{R}_{s_0}$ and $w$ represents its corresponding weight. The judge's reward for each rubric has only $0$ or $\pm1$. Therefore, the evolutionary objective of the judge is to provide more well-founded and discriminative rewards for the responses, ensuring the learning of the solver. As can be seen from Equation~\ref{eq:rubric}, the judge requires extensive inference at each step, so for training efficiency considerations, the judge in HOTE uses a fixed instruction model. In this case, the key to judge evolution shifts to how to drive the evolution of the rubric set $\mathcal{R}_{s_0}$ for $s_0$. Inspired by~\cite{shao2025dr}, given $\mathcal{R}_{s_0}=\mathcal{R}^{\text{persi.}}_{s_0}\cup\mathcal{R}^{\text{active}}_{s_0}$ where $\mathcal{R}^{\text{persi.}}$ contains persistent rubrics of $s_0$ and $\mathcal{R}^{\text{active}}_{s_0}$ contains active rubrics of $s_0$ that can be deleted or added, HOTE prompts the judge to update $\mathcal{R}^{\text{active}}_{s_0}$ based on $\{o_i\}_{i=1}^G$ at each step before assigning rewards as follows:

\begin{align}
\mathcal{R}^{\text{active}}_{s_0}=\text{Update}_{\pi_{\theta_{j}}}(s_0,\{o_i\}_{i=1}^G,\mathcal{R}^{\text{active}}_{s_0}).
\label{eq:judge_evolve}
\end{align}

The judge will generate two types of rubrics: \textbf{positive rubrics} that capture strengths or new, relevant knowledge explored by $\pi_{\theta_{s}}$ in $\{o_i\}_{i=1}^G$ but not yet reflected in $\mathcal{R}_{s_0}$, and \textbf{negative rubrics} that summarize common undesirable behaviors such as reward hacking observed across $\{o_i\}_{i=1}^G$. By observing the responses, the judge continuously tracks and uncovers weaknesses in both the rubric set and solver.

\textbf{Proposer Evolution}. 
The objective of proposer $\pi_{\theta_{p}}$ evolution is to enhance the capability to search for materials and propose research tasks that can expose the weaknesses of the solver, based on the judge's assessment. Similar to solver evolution, HOTE uses GRPO to achieve the evolution of the proposer as shown in Equation~\ref{eq:grpo}, with the distinction that $s_0$ becomes \textbf{proposing research tasks based on the judge's assessment $\mathcal{A}$} represented by $s_0^{p}$, and $\{o_i\}_{i=1}^G$ becomes \textbf{a group of research tasks proposed by the proposer} $\{o_i^{p}\}_{i=1}^{G'}$. However, there are two key issues left:
\begin{itemize}
\item The assessment from judge $\mathcal{A=}\{\text{Judge}_{\pi_{\theta_{j}}}(o_i,R)\mid (R,w)\in \mathcal{R}_{s_0},1\le i\le G\}$ includes rewards for each rubric of every response, thus using all of them as input to the proposer results in excessive length, slowing down training speed.
\item $\{o_i^{p}\}_{i=1}^{G'}$ proposed by the proposer lacks shared rubrics, making it difficult to evaluate their relative strengths and weaknesses. 
\end{itemize}
Therefore, we propose \textbf{meta rubrics}, allowing the judge to summarize assessments into multiple meta rubrics, uncovering common model weaknesses among the solver’s responses as follows:

\begin{align}
\mathcal{R}^{\text{meta}}_{\{o_i^{p}\}_{i=1}^{G'}}=\text{Meta}_{\pi_{\theta_{j}}}(\mathcal{A},\mathcal{R}_{s_0},\{o_i\}_{i=1}^G).
\label{eq:meta}
\end{align}

These meta rubrics serve as the proposer input and persistent rubrics shared across $\{o_i^{p}\}_{i=1}^{G'}$. On one hand, they are used for solver evolution; on the other hand, they leverage the reward of solver responses to compute the reward $r_i^p$ of research task $o_i^p$ as follows:

\begin{equation}
\begin{aligned}
&r_i^p=\frac{1}{M}\textstyle\sum_{(R,w)\in \mathcal{R}^{\text{meta}}_{\{o_i^{p}\}_{i=1}^{G'}}}\\
&\mathbb{I}\cdot(1-\mathbb{E}_{\{o_j\}_{j=1}^G \sim \pi_{\theta_s}(\cdot \mid o_i^p)}[\text{Judge}_{\pi_{\theta_{j}}}(o_j,R)])
\end{aligned}
\label{eq:judge_reward}
\end{equation}
where $M=\left|\mathcal{R}^{\text{meta}}_{\{o_i^{p}\}_{i=1}^{G'}}\right|$. $\mathbb{I}$ represents whether there is an $o_j$ that passes the rubric $R$. `1' represents the max average reward the solver $\pi_{\theta_s}$ can obtain given the judge's reward for each rubric is limited to $0$ or $\pm1$. Through Equation~\ref{eq:judge_reward}, we encourage the proposer to generate challenging but solvable research tasks for the solver.

\subsection{Dual-mode Hybrid Training Strategy} 
The independent evolution of the three modules mentioned above is insufficient. They should complement one another to form an evolution pipeline where a stronger solver stimulates a more refined judge, and a stronger proposer and judge inspire the proposer to formulate more challenging problems, which in turn train a stronger solver.  
As shown in Figure~\ref{fig:method}, our proposed dual-mode hybrid training strategy primarily encompasses three key features.  

\textbf{Hybrid Data}. Except for the first step, the training data for each step (comprising a batch size of $B$ research tasks) consists of $\frac{B}{2}$ research tasks from the original training set and $\frac{B}{2}$ synthetic research tasks proposed by the proposer based on evaluations from the previous step. Beyond leveraging existing data resources and facilitating agent evolution, this design allows synthetic research tasks generated by the proposer to be immediately solved by the solver and evaluated by the judge. The evaluation results can then be used to optimize both the proposer and the judge simultaneously, avoiding the need for repeated sampling.  

\textbf{Diverse Proposing}. We found when the proposer generates research tasks based solely on the judge’s evaluation and all research tasks from the previous step, they tend to concentrate on the same topic, which can undermine the balance and diversity of the training data. Therefore, at each step, we prompt the proposer to generate $N$ groups of research problems based on the judge’s evaluation and $N$ distinct combinations of research tasks from the previous step. 

\textbf{Hybrid Modes}. As illustrated in Figure~\ref{fig:reli}(b), we found that for DR Tulu-8B-SFT, DR Tulu-8B-RL and HOTE trained solely in no-tool mode, their performance on Healthbench under no-tool mode exceeds that under tool-use mode. This phenomenon can be attributed to factors such as noise in the search tool and it is acceptable in practical applications to trade evaluation metrics for research reports with clear references. Intuitively, we think it is easier to learn research report generation techniques excluding reference searching and understanding in a no-tool training mode than a tool-use training mode. Meanwhile, as shown in Figure~\ref{fig:reli}(c), for HOTE trained in no-tool mode, its performance on DRB under tool-use mode exhibits a clear pattern of initial improvement followed by decline, suggesting that no-tool training leads the model to rely excessively on parametric knowledge. Therefore, we randomly assign half of the training data in each step to no-tool mode and the other half to tool-use mode (to ensure fairness in judging synthetic research tasks, this assignment is randomized across the $N$ groups), thereby enhancing research report generation techniques and avoiding over-reliance on parameterized knowledge. 

In actual training, we trained $600$ steps using no-tool mode and then trained $700$ steps using hybrid mode. Besides, we theoretically prove that the hybrid mode results in a lower expected maximum generation time in Appendix~\ref{appendix:proof_time_comparison}.

\section{Experiment}
\label{sec:experiment}

Our experiments aim to address five research questions. \textbf{RQ1:} Does HOTE demonstrate stronger capabilities in handling open-ended research tasks with less time overhead? \textbf{RQ2:} Are the three modules indispensable for HOTE evolution? \textbf{RQ3:} Does HOTE facilitate the collaborative progress of dual modes? \textbf{RQ4:} Is HOTE effective with different base models? \textbf{RQ5:} Does HOTE evolve more sustainably? We additionally provide the case study, the effect of judge models, prompts and diverse proposing in Appendix~\ref{sec:case} and~\ref{sec:effect}.

\begin{table*}[t]
\centering
\caption{Performance comparison across long-form deep research benchmarks. HOTE-8B outperforms existing \textit{Open Deep Research Models}, \textit{Open Deep Research}, \textit{RL Methods} and \textit{Evolving Methods}.}
\small
\scalebox{0.85}{ 
\begin{tabular}{lcc|ccccc|c}
\toprule
\multirow{2}{*}{\textbf{Method}}  
& \multirow{2}{*}{\textbf{HealthBench}} 
& \multirow{2}{*}{\textbf{ResearchQA}} 
& \multicolumn{5}{c|}{\textbf{DRB}} 
& \multirow{2}{*}{\textbf{Average}} \\
\cmidrule(lr){4-8}
& & 
& \textbf{Overall} 
& \textbf{Comp} 
& \textbf{Insight} 
& \textbf{Instruction} 
& \textbf{Readability} 
& \\
\midrule

\rowcolor{gray!20} \multicolumn{9}{c}{\textit{Closed Deep Research}} \\
Gemini 3 Pro + Search & 38.0 & 74.3 & 46.3 & 43.4 & 44.9 & 49.8 & 49.0 & 52.9 \\
GPT-5 + Search & 59.5 & 78.2 & 50.7 & 26.7 & 21.3 & 41.0 & 29.4 & 62.8 \\
OpenAI Deep Research  & 53.8 & 79.2 & 46.9 & 46.8 & 45.2 & 49.2 & 47.1 & 60.0\\


\midrule
\rowcolor{gray!20} \multicolumn{9}{c}{\textit{Open Deep Research Models}} \\
Qwen3-8B & 5.9 & 46.3 & 18.2 & 14.3 & 8.7 & 29.5 & 24.4 & 23.5 \\
Qwen3-235B-A22B & 21.3 & 50.7 & 22.5 & 19.1 & 17.3 & 30.6 & 25.1 & 31.5 \\
Search-R1-7B & -0.1 & 27.9 & 9.5 & 5.2 & 2.1 & 18.6 & 16.8 & 12.4 \\
ASearcher-Web-7B & -13.0 & 19.4 & 7.8 & 5.1 & 1.7 & 15.2 & 11.8 & 4.7 \\
WebExplorer-8B & 33.7 & 64.8 & 36.7 &  33.7 & 28.5 & 45.7 & 42.2 & 45.1 \\
WebThinker-32B-DPO & 11.1 & 48.6 & 23.3 & 19.7 & 12.3 & 36.8 & 26.3 & 27.7 \\
Tongyi DeepResearch-30B-A3B & 46.2 & 66.7 & 40.6 & 39.1 & 34.3 & 46.8 & 45.4 & 51.2\\

\midrule
\rowcolor{gray!20} \multicolumn{9}{c}{\textit{Fixed Pipeline Deep Research}} \\
WebThinker QwQ-32B (report) & 36.5 & 72.8 & 37.9 & 36.2 & 32.6 & 43.2 & 42.9 & 49.1 \\
WebThinker-32B-DPO (report) & 39.4 & 74.2 & 40.6 & 39.4 & 35.4 & 46.0 & 43.5 & 51.4 \\
Ai2 ScholarQA-Claude Sonnet (report) & 32.0 & 75.0 & 36.1 & 35.1 & 32.0 & 40.5 & 38.9 & 47.7 \\

\midrule
\rowcolor{gray!20} \multicolumn{9}{c}{\textit{Open Deep Research}} \\
DR Tulu-8B-SFT  & 38.1 & 68.5 & 39.0 & 36.3 & 35.3 & 45.5 & 39.5 & 48.5 \\
DR Tulu-8B-RL\tablefootnote{We use the 1900-step checkpoint of DR Tulu.} & 50.2 & 74.3 & 43.4 & 41.7 & 41.8 & \textbf{48.2} & 41.3 & 56.0 \\

\midrule
\rowcolor{gray!20} \multicolumn{9}{c}{\textit{RL Methods}} \\
GRPO & 49.6 & 73.5 & 43.1 & 40.8 & 42.1 & 46.9 & 42.6 & 55.4 \\
GSPO & 51.0 & 75.1 & 43.6 & 42.5 & 40.9 & 47.3 & 43.7 & 56.6 \\
REINFORCE++ & 50.8 & 74.8 & 43.1 & 41.2 & 42.7 & 46.1 & 42.4 & 56.2 \\

\midrule
\rowcolor{gray!20} \multicolumn{9}{c}{\textit{Evolving Methods}} \\
SPICE-8B  & 50.2 & 73.9 & 42.1 & 40.6 & 40.9 & 46.1 & 40.8 & 55.4 \\
Dr. Zero-8B & 52.1 & 73.2 & 43.7 & 41.5 & 42.1 & 46.5 & 44.7 & 56.3 \\

\midrule
\rowcolor{gray!20} \multicolumn{9}{c}{\textit{Open Evolving Deep Research}} \\
HOTE-8B & \textbf{54.4} & \textbf{76.9} & \textbf{45.9} & \textbf{44.9} & \textbf{45.4} & 47.8 & \textbf{45.8} & \textbf{59.1} \\

\bottomrule
\end{tabular}
}
\label{tab:deep_research_results}
\end{table*}

%
%
%


\begin{table}[h]
\centering
\caption{Average training time per step for baselines, no-tool mode and hybrid mode of HOTE.}
\small
\begin{tabular}{llcc}
\toprule
\multicolumn{2}{c}{\textbf{Method}} & \textbf{Wall-clock (s/step)} & \textbf{GPU (h/step)} \\

\midrule
\multicolumn{2}{c}{DR Tulu} & 1136.8 & 3.8 \\
\multicolumn{2}{c}{GRPO} & 1024.2 & 3.5 \\
\multicolumn{2}{c}{GSPO} & 1011.3 & 3.4 \\
\multicolumn{2}{c}{REINFORCE++} & 979.6 & 3.3 \\
\multicolumn{2}{c}{SPICE} & 1313.7 & 4.0 \\
\multicolumn{2}{c}{Dr. Zero} & 1301.0 & 4.0 \\
\multirow{2}{*}{HOTE} & no-tool & \textbf{382.0} & \textbf{1.5} \\
                      & hybrid  & \textbf{753.3} & \textbf{2.6} \\
\bottomrule
\end{tabular}
\label{tab:cost_results}
\vspace{-2mm}
\end{table}

\subsection{Evaluations}


\textbf{Benchmark}. We evaluated HOTE and baseline models across three long-form, open-ended benchmarks: HealthBench~\cite{arora2025healthbench} for healthcare deep research, ResearchQA~\cite{yifei2025researchqa} for assessing synthesis over scientific literature, the DeepResearchBench~\cite{du2025deepresearch} (DRB) for evaluating general-domain deep research tasks. For DRB, we additionally provide detailed performance across diverse aspects of the responses. DRB includes the following aspects: Comprehensiveness, Insight, Instruction Following, and Readability. 
In Table~\ref{tab:deep_research_results}, we followed~\cite{shao2025dr} by using HealthBench with 1,000 samples and ResearchQA with 776 samples. For other experimental results, we sampled 100 instances each from HealthBench and ResearchQA respectively.  Please refer to Appendix~\ref{sec:detail} for the benchmark details. 

\textbf{Baselines}. We compared four categories of deep researchers: \textit{Open Deep Research Models}, including Qwen3-8B~\cite{yang2025qwen3}, Qwen3-235B-A22B~\cite{yang2025qwen3}, Search-R1-7B~\cite{jin2025search}, ASearcher-Web-7B~\cite{gao2025beyond}, WebExplorer-8B~\cite{liu2025webexplorer}, WebThinker-32B-DPO~\cite{li2025webthinker}, Tongyi DeepResearch-30B-A3B~\cite{team2025tongyi}; \textit{Open Deep Research}, including DR Tulu-8B-SFT and DR Tulu-8B-RL~\cite{shao2025dr}; \textit{RL Method}, including GRPO~\cite{shao2024deepseekmath}, GSPO~\cite{zheng2025group} and REINFORCE++~\cite{hu2025reinforce++}; \textit{Evolving Method}, including SPICE~\cite{liu2025spice} and Dr. Zero~\cite{yue2026dr}. We also provided \textit{Closed Deep Research}, including Gemini 3 Pro, GPT-5, and OpenAI Deep Research; \textit{Fixed Pipeline Deep Research}, including WebThinker QwQ-32B, WebThinker-32B-DPO, and Ai2 ScholarQA-Claude Sonnet~\cite{singh2025ai2} for reference. Please refer to Appendix~\ref{sec:detail} for implementation details.

\subsection{Training Details}
We utilized Qwen3-8B to initialize the checkpoint of the proposer and DR Tulu-8B-SFT to initialize the checkpoint of the solver. For Open Deep Research, RL Methods, Evolving Methods and Open Evolving Deep Research, we used the same original RL training set in DR Tulu~\cite{shao2025dr} licensed under ODC-BY with 9K samples to ensure fairness. We employed Qwen3-235B-A22B-Instruct-FP8 as the judge. The batch size $B$ was set to $48$, the group size of solver $G$ to $8$, the group size of proposer $G'$ to $6$, the learning rate to $5e-7$, the maximum number of tool uses per response $T$ to $10$, the temperature to $1$, and the response length to $16384$. For the performance in Table~\ref{tab:deep_research_results}, RL Methods and Evolving Methods were trained for 1300 steps until they converged on a held-out validation set while HOTE-8B was trained in no-tool mode for 600 steps and hybrid mode for 700 steps. We provide hyperparameter analysis in Appendix~\ref{sec:hyper}.

\subsection{RQ1: Does HOTE demonstrate stronger capabilities in handling open-ended research tasks with less time overhead?}
As shown in Table~\ref{tab:deep_research_results}, the HOTE-8B model surpasses the open-source solution DR Tulu on HealthBench, ResearchQA and DRB. It also outperforms Open Deep Research Models including Tongyi DeepResearch-30B-A3B. As illustrated in Figure~\ref{fig:abl}, HOTE also leads existing rl and agent evolution methods. Additionally, due to the presence of hybrid mode, only half of the research tasks in the latter $700$ steps out of the total $1300$ steps require tool-use. Given that the maximum number of tool-use per response is $T$, the batch size is $B$ and the group size is $G$, the maximum number of tool-use required for HOTE training is $350BTG+175B$(term 1 for solver, term 2 for proposer). In contrast, for DR Tulu it is $1900BTG$ while for RL Methods and Evolving Methods it is $1300BTG$. Furthermore, as indicated in Table~\ref{tab:cost_results}, even with the addition of proposer evolution, both the no-tool mode in the first $600$ steps and the hybrid mode in the latter $700$ steps contribute to improvements in training speed.


\subsection{RQ2: Are the three modules indispensable for HOTE evolution?}
We compared HOTE, SPICE, the HOTE version without judge evolution (HOTE w/o je, equivalent to Dr. Zero using rubric-based reward and GRPO), and the HOTE version without proposer evolution (HOTE w/o pe, the proposer's parameters are fixed) in the no-tool mode. As shown in Figure~\ref{fig:abl}, when training in the no-tool mode using HOTE, although HOTE initially performed slightly worse on the benchmark compared to SPICE, HOTE w/o je and HOTE w/o pe, it gradually achieved overall superiority as training progressed. More importantly, while HOTE w/o je, HOTE w/o pe and SPICE approached convergence, HOTE maintained a stronger upward trend. Moreover, as can be seen from Figure~\ref{fig:npe}, with proposer evolution enabled, the scores of synthetic research tasks are more stable compared to fixed proposer parameters, which indicates that proposer evolution helps maintain the difficulty of research tasks.


\subsection{RQ3: Does HOTE facilitate the collaborative progress of dual modes?}
Figure~\ref{fig:reli}(a) shows the performance of HOTE, the open-source training approach DR Tulu, as well as HOTE trained exclusively in tool-use mode, evaluated on HealthBench, ResearchQA, and DRB under both no-tool and tool-use modes. HOTE outperforms both DR Tulu and the single-mode version across both no-tool and tool-use evaluation modes, achieving collaborative progress in the dual modes by enhancing research report generation techniques while avoiding over-reliance on parameterized knowledge. 


\begin{figure*}[t]
\centering
\includegraphics[width=0.99\textwidth]{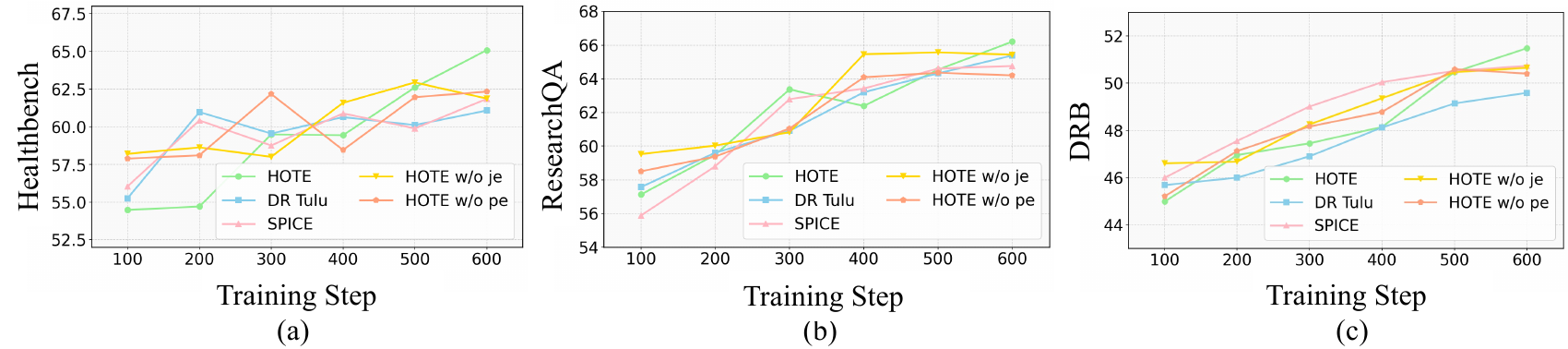}
\caption{In HealthBench (a), ResearchQA (b), and DeepResearchBench (c), after 600 steps of training in no-tool mode, HOTE outperforms SPICE, HOTE w/o je and HOTE w/o pe while demonstrating an upward trend.}
\label{fig:abl}
\end{figure*}

\begin{figure}[t]
\centering
\includegraphics[width=0.49\textwidth]{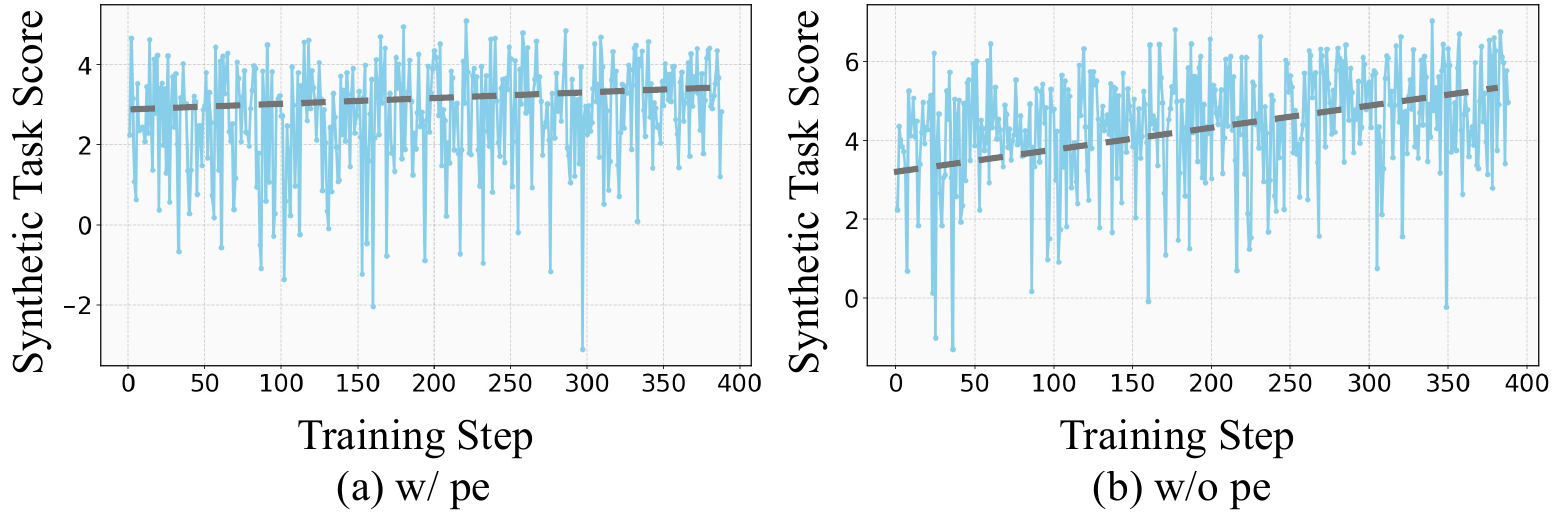}
\caption{(a) During the training in no-tool mode with proposer evolution enabled, the solver's synthetic task score remains stable, indicating that proposer evolution maintains the challenge of the tasks for the evolving solver; (b) After disabling proposer evolution, the solver's synthetic task score gradually increases.}
\label{fig:npe}
\end{figure}


\subsection{RQ4: Is HOTE effective with different base models?}

As shown in Table~\ref{tab:other_models}, we also provided the performance comparison on Llama3.1-8B-Instruct supervised fine-tuned by dr-tulu-sft-data~\cite{shao2025dr} for 5 epochs. HOTE maintains its lead across three benchmarks and over baselines that we can train by ourselves including \textit{Open Deep Research}, \textit{RL Methods} and \textit{Evolving Methods}. The absolute scores are lower than using DR Tulu-8B-SFT fine-tuned from Qwen3-8B due to the lower capability of the base model.

\begin{table}[t]
\centering
\caption{Performance comparison with Llama3.1-8B-Instruct supervised fine-tuned by dr-tulu-sft-data as the base model.}
\small
\scalebox{0.95}{ 
\begin{tabular}{lccc}
\toprule
\textbf{Method} & \textbf{HealthBench} & \textbf{ResearchQA} & \textbf{DRB} \\

\midrule
DR Tulu & 28.8 & 60.0 & 32.3 \\
GRPO & 27.2 & 59.8 & 31.0 \\
GSPO & 27.8 & 59.0 & 31.5 \\
REINFORCE++ & 27.2 & 59.3 & 32.6 \\
SPICE & 29.2 & 62.0 & 32.9 \\
Dr. Zero & 28.4 & 61.0 & 31.5 \\
HOTE & \textbf{33.1} & \textbf{64.2} & \textbf{35.1} \\
\bottomrule
\end{tabular}
}
\label{tab:other_models}
\vspace{-2mm}
\end{table}

\subsection{RQ5: Does HOTE evolve more sustainably than baselines?}

We compared the performance of HOTE and the baselines during training from 1200 to 1500 total steps. We use the average performance on the three benchmarks. As shown in Table~\ref{tab:sustainably}, the baselines have already converged, whereas HOTE not only outperforms the baselines but also continues to exhibit an upward trend. HOTE can sustain continuous evolution for at least 252 hours (1500 steps) of wall-clock time.

\begin{table}[t]
\centering
\caption{Average performance comparison across three benchmarks between HOTE and baselines from 1200 steps to 1500 steps in total. HOTE evolves more sustainably than baselines.}
\small
\scalebox{0.98}{ 
\begin{tabular}{lcccc}
\toprule
\textbf{Method} & \textbf{1200} & \textbf{1300} & \textbf{1400} & \textbf{1500} \\

\midrule
GRPO & 55.4 & 55.4 & 55.2 & 55.2\\
GSPO & 56.4 & 56.6 & 56.6 & 56.4 \\
REINFORCE++ & 56.2 & 56.2 & 55.7 & 56.0 \\
SPICE & 55.5 & 55.4 & 55.2 & 55.4 \\
Dr. Zero & 56.3 & 56.3 & 56.5 & 56.1 \\
HOTE & \textbf{58.0} & \textbf{59.1} & \textbf{59.6} & \textbf{59.9} \\
\bottomrule
\end{tabular}
}
\label{tab:sustainably}
\vspace{-2mm}
\end{table}



\section{Conclusion}    

We propose Hybrid Open-Ended Tri-Evolution (HOTE), aiming to develop a deep researcher capable of autonomous evolution in open-ended environments for open-ended tasks with less time overhead. Through a well-designed reinforcement learning with hybrid modes, HOTE achieves synergistic evolution among the proposer, solver and judge as well as the mutual benefit between no-tool and tool-use modes. On three long-form deep research benchmarks, HOTE-8B outperforms the strongest open 8-32B models and state-of-the-art deep research training methods with less time overhead. In future work, we will continue to explore how to handle noise in real-world search tools during the evolutionary process, how to break free from dependence on original training dataset and how to scale HOTE to larger MoE models.

\section*{Limitations}

The evolution gradually slows down as training progresses and is difficult to obtain perfect scores, suggesting that the upper bound of evolution may still be constrained by model scale. Investigating the scaling capability of HOTE will be a major direction of our future work. The proposed method still relies on the initial training data, but we believe that transcending the limitations of existing training data through evolution is inherently valuable.




\bibliography{acl_latex}

\appendix

\clearpage

\section{Related Work}
\label{sec:related}

We summarize the contribution of HOTE in Table~\ref{tab:method_comparison}.

\begin{table*}[t]
\centering
\small
\caption{The contribution of HOTE.}
\label{tab:method_comparison}
\begin{tabular}{lccccc}
\toprule
\textbf{Method} &
\textbf{Proposer} &
\textbf{Solver} &
\textbf{Judge} &
\textbf{Open-ended} &
\textbf{Open-ended} \\
&
\textbf{Evolve} &
\textbf{Evolve} &
\textbf{Evolve} &
\textbf{Task} &
\textbf{Environment} \\
\midrule
Dr. zero~\cite{yue2026dr} (Meta)   & \cmark & \cmark & \xmark & \xmark & \cmark \\
DR Tulu~\cite{shao2025dr} (Ai2)     & \xmark & \cmark & \cmark & \cmark & \xmark \\
Spice~\cite{liu2025spice} (ICLR 2026) & \cmark & \cmark & \xmark & \xmark & \xmark \\
R-zero~\cite{huang2025r} (ICLR 2026)& \cmark & \cmark & \xmark & \xmark & \xmark \\
\midrule
\textbf{HOTE}     & \textbf{\cmark} & \textbf{\cmark} & \textbf{\cmark} & \textbf{\cmark} & \textbf{\cmark} \\
\bottomrule
\end{tabular}
\end{table*}

\subsection{Deep Research Agents}

Deep research, defined as AI agents' capability to handle open-ended, long-term, and highly complex information retrieval and integration, has become key for AI agents to move beyond conversational interaction toward general autonomy~\cite{hu2025step,openai2025deepresearch}. 

On the inference front, \citet{wu2025agentic,qin2025flash,schmidgall2025agent} have shown that constructing complex workflows and context management can lead to substantial performance improvements. However, such methods rely on manual prompting, lack generality and flexibility, and make it difficult to evaluate the inherent autonomous agent capabilities of the model~\cite{li2025websailor}. On the training front, research has primarily focused on how to end-to-end train autonomous deep research agents based on flexible reasoning paradigms similar to ReAct~\cite{yao2022react}, enabling them to self-plan, acquire knowledge and summarize. Search-R1~\cite{jin2025search} applies reinforcement learning with verifiable rewards (RLVR) to enhance search capabilities and is trained mainly on short-form question answering~\cite{wei2024measuring,wu2025webwalker,ho2020constructing}. This approach has been explored in many recent follow-up studies, including WebExplorer~\cite{liu2025webexplorer}, Tongyi Deep Research~\cite{team2025tongyi} and WebSailor-V2~\cite{li2025websailor}. WebThinker~\cite{li2025webthinker} and MiroThinker~\cite{team2025mirothinker} extend training to longer report generation and more rounds of tool usage. To address the lack of clearly defined evaluation metrics for long-form deep research responses, DR Tulu~\cite{shao2025dr} proposes Reinforcement Learning via Evolving Rubrics (RLER), which dynamically updates evaluation rubrics based on sampled policy responses. Although the above studies enable agents to autonomously conduct research based on user queries, they lack a process for autonomous exploration and improvement of deep research capabilities. Dr. Zero~\cite{yue2026dr} designs a framework based on search-based proposer–solver self-play, enabling the two to co-evolve without exposure to any training data, but it is limited to short-form and easily verifiable question answering. 

Therefore, we propose the first deep research agent evolution framework that supports open-ended long-form report generation tasks, aiming to achieve both practicality and autonomy simultaneously.

\subsection{Agent Evolving with Self-play}

Agent evolution has long been regarded as a pathway toward achieving artificial general intelligence, signifying the capability of agents to autonomously interact with the environment and continuously learn~\cite{liu2025spice}. Self-play offers a highly promising paradigm for agent evolution, wherein an agent system learns from feedback automatically generated through competition with itself. In the domain of games, self-play has led to achievements such as TD-Gammon’s backgammon mastery~\cite{tesauro1995temporal}, AlphaGo’s superhuman performance in Go~\cite{silver2017mastering}, and CICERO’s capability to understand cooperative strategies~\cite{meta2022human}. In the field of large language models, some approaches enable models to serve dual roles as solver and judge, optimizing strategies without the need for human annotation~\cite{chen2024self,wu2024self,yuan2024self,wan2026inference}. However, such evolution is constrained by the queries in the training set, limiting the model’s ability to autonomously explore new knowledge and skills. By assigning the agent system the roles of both query proposer and solver, significant improvements have been achieved in areas such as mathematics, coding, and general reasoning, surpassing the limitations of the original training set and even demonstrating zero-data effectiveness~\cite{huang2025r,zhao2025absolute,wang2025cure,chen2025self}. To further overcome the inherent limitations of agent capabilities, methods such as SPICE~\cite{liu2025spice} and Dr. Zero~\cite{yue2026dr} provide proposers with large-scale corpora and search engines, facilitating the evolution of agent systems in open-ended environments. However, existing approaches remain confined to verifiable tasks, falling short of addressing the reality of numerous open-ended tasks with ambiguous or undefined boundaries encountered in real-world applications. 

Therefore, we propose an open-ended agent evolution framework tailored to open-ended tasks that are difficult to verify. Through mutual play among proposer, solver and judge, the framework enables collaborative evolution with web-scale knowledge.

\section{Proof of Expected Maximum Generation Time Comparison}
\label{appendix:proof_time_comparison}

We formally derive the inequality between the expected maximum generation time of a \textbf{tool-use} strategy and a \textbf{hybrid-mode} strategy.

\subsection{Problem Setup}

Let $X$ denote the random variable representing the generation time for the \textbf{tool-use} mode, and $Y$ denote the generation time for the \textbf{no-tool} mode. We assume these follow normal distributions with identical variances $\sigma^2$ but distinct means:
\begin{align}
    X &\sim \mathcal{N}(\mu_T, \sigma^2), \\
    Y &\sim \mathcal{N}(\mu_N, \sigma^2),
\end{align}
where $\mu_T > \mu_N$. Let $F_X(t)$ and $F_Y(t)$ denote the cumulative distribution functions (CDFs) of $X$ and $Y$, respectively. Since $\mu_T > \mu_N$ and the variances are equal, we have the strict inequality for the CDFs:
\begin{equation}
    F_X(t) < F_Y(t), \quad \forall t \in \mathbb{R}.
    \label{eq:cdf_inequality}
\end{equation}

We consider the number of generations of the solver as $K$.
\begin{itemize}
    \item \textbf{Strategy A (tool-use):} The maximum generation time $M_A$ is defined as the maximum of $K$ independent and identically distributed (i.i.d.) variables $X_1, \dots, X_K \sim X$:
    \begin{equation}
        M_A = \max \{X_1, \dots, X_K\}.
    \end{equation}
    \item \textbf{Strategy B (hybrid mode):} The maximum generation time $M_B$ is defined as the maximum of $K/2$ variables of type $X$ and $K/2$ variables of type $Y$, all mutually independent:
    \begin{equation}
        M_B = \max \{X_1, \dots, X_{K/2}, Y_1, \dots, Y_{K/2}\}.
    \end{equation}
\end{itemize}

\begin{theorem}
The expected maximum generation time of Strategy A is strictly greater than that of Strategy B, i.e., $E[M_A] > E[M_B]$.
\end{theorem}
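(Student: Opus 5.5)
We compare the two maxima through their CDFs, using the tail-integral representation of the expectation. By independence,
\[
F_{M_A}(t) = F_X(t)^K, \qquad F_{M_B}(t) = F_X(t)^{K/2} F_Y(t)^{K/2}.
\]
By \eqref{eq:cdf_inequality}, $0<F_X(t)<F_Y(t)<1$ for every $t$. Here $F_X(t)>0$ because Gaussian CDFs are strictly positive. Multiplying $F_X(t)^{K/2} < F_Y(t)^{K/2}$ by $F_X(t)^{K/2}>0$ gives the pointwise strict inequality $F_{M_A}(t) < F_{M_B}(t)$ for all $t\in\mathbb{R}$. Thus $M_A$ strictly first-order stochastically dominates $M_B$.

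Next we turn dominance into an inequality of expectations. Both maxima are integrable, since $|M_A|\le \sum_i |X_i|$, and similarly for $M_B$. Any integrable $Z$ satisfies
\[
E[Z] = \int_0^\infty \bigl(1-F_Z(t)\bigr)\,dt - \int_{-\infty}^0 F_Z(t)\,dt.
\]
Applying this to $M_A$ and $M_B$ and subtracting gives
\[
E[M_A]-E[M_B] = \int_{-\infty}^{\infty} \bigl(F_{M_B}(t)-F_{M_A}(t)\bigr)\,dt.
\]
The integrand is continuous and strictly positive everywhere, so the integral is strictly positive, which yields $E[M_A]>E[M_B]$.

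The only delicate point is justifying the subtraction of the two tail integrals. Each integral must be finite, which follows from integrability of the maxima via Gaussian tails, so no $\infty-\infty$ issue arises. Strictness then needs only positivity of the integrand on a set of positive measure, and here it holds everywhere.

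As a sanity check, a coupling argument gives the weak inequality directly. Set $Y_i = X_i' - (\mu_T-\mu_N)$ with $X_i'\sim X$, so each $Y_i < X_i'$ pointwise. Then $M_B$ is pointwise at most a variable with the law of $M_A$, giving $E[M_B]\le E[M_A]$. The CDF argument upgrades this to strict inequality. We would also note that $K$ is assumed even so that $K/2$ is an integer.
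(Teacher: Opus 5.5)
Your proposal is correct and follows the paper's route: compute the CDFs of both maxima by independence, use $F_X<F_Y$ to get $F_{M_A}(t)<F_{M_B}(t)$ pointwise, and pass to expectations through the tail-integral representation. Your version is more careful than the paper's. The paper only asserts that stochastic dominance gives the strict inequality of means, whereas you check integrability of the maxima and write $E[M_A]-E[M_B]=\int_{-\infty}^{\infty}\bigl(F_{M_B}(t)-F_{M_A}(t)\bigr)\,dt$ with a continuous, strictly positive integrand.
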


\begin{proof}
First, we derive the cumulative distribution functions for the random variables $M_A$ and $M_B$. For any $t \in \mathbb{R}$, the probability that the maximum of a set of independent variables is less than or equal to $t$ is the product of their individual probabilities.

For Strategy A:
\begin{equation}
    P(M_A \le t) = \prod_{i=1}^{K} P(X_i \le t) = [F_X(t)]^K.
\end{equation}

For Strategy B:
\begin{align}
    &P(M_B \le t) = \left( \prod_{i=1}^{K/2} P(X_i \le t) \right) \cdot \nonumber  \\
    &\left( \prod_{j=1}^{K/2} P(Y_j \le t) \right) = [F_X(t)]^{K/2} [F_Y(t)]^{K/2}.
\end{align}

Using the inequality from Eq.~\eqref{eq:cdf_inequality}, where $F_X(t) < F_Y(t)$ for all $t$, and noting that $F_X(t) > 0$ for sufficiently large $t$, we compare the two probabilities:
\begin{align}
    P(M_B \le t) &= [F_X(t)]^{K/2} [F_Y(t)]^{K/2} \nonumber \\
    &> [F_X(t)]^{K/2} [F_X(t)]^{K/2} \nonumber \\
    &= [F_X(t)]^K \nonumber \\
    &= P(M_A \le t).
\end{align}
Thus, $P(M_B \le t) > P(M_A \le t)$ for all $t$ where $F_X(t) > 0$. This implies that $M_A$ stochastically dominates $M_B$ (first-order stochastic dominance). 

In terms of the survival function (tail probability), this inequality is reversed:
\begin{align}
    P(M_A > t) &= 1 - P(M_A \le t) \\
    &> 1 - P(M_B \le t) \\
    &= P(M_B > t).
\end{align}

The expected value of a random variable $Z$ can be expressed as the integral of its survival function over its support. Assuming the support covers the real line:
\begin{align}
    E[Z] &= \int_{-\infty}^{\infty} t f_Z(t) dt \\
    &= \int_{0}^{\infty} P(Z > t) dt - \int_{-\infty}^{0} P(Z \le t) dt.
\end{align}
Given the stochastic dominance established above, the strict inequality holds for the expectation:
\begin{equation}
    E[M_A] > E[M_B].
\end{equation}
\end{proof}

\section{Case Study}
\label{sec:case}
We conducted case studies on HOTE-8B and DR Tulu-8B to illustrate the advantages of HOTE. We omit the \texttt{think} and \texttt{tool} because of they are too long. In practical applications, the final research report will be additionally appended with the searched references. In Case 1, as shown in Figure~\ref{fig:case1_1}-\ref{fig:case1_2}, HOTE demonstrates: (a) more comprehensive information: the response from HOTE-8B provides detailed citations from EACS guidelines, including baseline examination items (viral load, CD4 count, complete blood count, metabolic indicators, TB screening, opportunistic infection assessment, etc.); (b) better structure: the answer is clearly organized into sections such as "Summary", "Baseline workup" and "Virologic check points"; (c) stronger contextual awareness: it correctly identifies that this is a question for medical professionals, offering detailed guidelines suitable for their level. In contrast, DR Tulu offers a more concise response, presenting only "Bottom line" recommendations and lacking a complete monitoring timeline and baseline examination details. In Case 2, as shown in Figure~\ref{fig:case2_1}-\ref{fig:case2_5}, HOTE (a) correctly identifies an emergency: clearly states that "acute angle‑closure glaucoma is a true ophthalmic emergency" requiring "immediate evaluation and treatment to prevent rapid, irreversible vision loss"; (b) provides specific action advice: explains what the patient should do (seek evaluation by an ophthalmologist) and what examinations the doctor will perform; (c) offers complete clinical information: including symptom descriptions (severe eye pain, blurred vision, halos, headache, nausea) and treatment methods (laser peripheral iridotomy). DR Tulu, while providing background medical knowledge, fails to clearly inform the patient that this is an emergency requiring immediate medical attention.

\section{Algorithm}
We provide the complete training process of HOTE in Algorithm~\ref{alg:hote_training}.

\section{The effect of judge models for training, prompts and diverse proposing}
\label{sec:effect}
We further trained with Qwen3-235BA22B-Think and Qwen3-30BA3B-Instruct as the judge model (2507-FP8 version), along with the average wall-clock time per step. The results in Table~\ref{tab:judge_model_comparison} show that: (i) a smaller-scale judge model leads to a moderate performance degradation; (ii) a thinking model achieves nearly identical performance but substantially reduces training efficiency. Therefore, we recommend using large-scale open-source instruct models to strike a balance between effectiveness and computational overhead. We consistently set: temperature=0, max\_tokens=16384, top\_p=1.0.

The prompts are role-defining system instructions for the proposer, solver, and judge modules, designed to specify each module’s task and output format. The samples are minimal format demonstrations and are not part of the evaluation benchmarks. To test sensitivity, we replaced samples and rephrased role-defining instructions with three different sets on HOTE and three baselines. We observed negligible impact on final performance in Table~\ref{tab:diff}, suggesting that the method is not materially dependent on a particular prompt/sample choice.

As shown in Table~\ref{tab:diverse}, the diverse proposing effectively improve the performance on three benchmarks, illustrating its importance in ensuring the quality of proposed research tasks.

\begin{table*}[t]
\centering
\small
\caption{Performance and training efficiency under different judge models.}
\label{tab:judge_model_comparison}
\begin{tabular}{lcccc}
\toprule
\textbf{Judge Model} & \textbf{HealthBench} & \textbf{ResearchQA} & \textbf{DRB} & \textbf{Avg. Wall-clock Second/Step} \\
\midrule
Qwen3-235BA22B-Think     & 54.6 & 76.9 & 45.5 & 1065.5 \\
Qwen3-30BA3B-Instruct    & 50.2 & 72.2 & 42.0 & 723.7 \\
Qwen3-235BA22B-Instruct  & 54.4 & 76.9 & 45.9 & 753.3 \\
\bottomrule
\end{tabular}
\end{table*}

\begin{table*}[t]
\centering
\small
\caption{The effect of diverse proposing.}
\label{tab:diverse}
\begin{tabular}{lcc}
\toprule
\textbf{Benchmark} & \textbf{w/ diverse proposing} & \textbf{w/o diverse proposing} \\
\midrule
HealthBench & 54.4 & 50.2 \\
ResearchQA  & 76.9 & 74.1 \\
DRB         & 45.9 & 42.2 \\
\bottomrule
\end{tabular}
\end{table*}

\begin{table*}[t]
\centering
\small
\caption{Performance statistics across three evaluation runs.}
\label{tab:run}
\begin{tabular}{lc}
\toprule
\textbf{Benchmark} & \textbf{Mean$\pm$Std} \\
\midrule
HealthBench & 54.5$\pm$0.1 \\
ResearchQA  & 76.7$\pm$0.2 \\
DRB         & 45.9$\pm$0.0 \\
\bottomrule
\end{tabular}
\end{table*}

\begin{table*}[t]
\centering
\small
\caption{Performance comparison across different samples and role-defining instructions.}
\label{tab:diff}
\begin{tabular}{llccc}
\toprule
\textbf{Exp} & \textbf{Method} & \textbf{HealthBench} & \textbf{ResearchQA} & \textbf{DRB} \\
\midrule
\multirow{4}{*}{Different samples}
& Search-R1-7B                & 0.0$\pm$0.0 & 28.0$\pm$0.2 & 9.5$\pm$0.0 \\
& WebExplorer-8B              & 33.7$\pm$0.1 & 64.8$\pm$0.2 & 36.7$\pm$0.0 \\
& Tongyi DeepResearch-30B-A3B & 46.2$\pm$0.4 & 66.7$\pm$0.1 & 40.6$\pm$0.1 \\
& HOTE-8B            & \textbf{54.4$\pm$0.2} & \textbf{76.6$\pm$0.2} & \textbf{46.3$\pm$0.3} \\
\midrule
\multirow{4}{*}{Different role-defining instructions}
& Search-R1-7B                & 0.0$\pm$0.0 & 27.9$\pm$0.2 & 9.5$\pm$0.0 \\
& WebExplorer-8B              & 33.8$\pm$0.1 & 64.8$\pm$0.1 & 36.7$\pm$0.0 \\
& Tongyi DeepResearch-30B-A3B & 46.2$\pm$0.1 & 66.6$\pm$0.2 & 40.6$\pm$0.0 \\
& HOTE-8B            & \textbf{54.4$\pm$0.1} & \textbf{76.7$\pm$0.2} & \textbf{45.9$\pm$0.1} \\
\bottomrule
\end{tabular}
\end{table*}

\section{Details}
\label{sec:detail}

\subsection{Implementation details}

For \textit{RL Methods} and \textit{Evolving Methods} that we can fully control the training process, since long-form deep research tasks do not have standard reference answers, we consistently adapted them from RLVR to rubric-based reward following~\cite{shao2025dr} without judge evolution. For \textit{Evolving Methods} including SPICE and Dr. Zero, we also consistently adapted them in the same manner as HOTE by utilizing Qwen3-8B to initialize the proposer checkpoint and DR Tulu-8B-SFT to initialize the solver checkpoint. For \textit{Open Deep Research Models}, \textit{Open Deep Research}, \textit{RL Methods}, \textit{Evolving Methods} and HOTE that we can fully control the inference process, we use Serper API for google\_search, Jina API for web\_browse and Semantic Scholar API for paper\_search. We ensured that no data from the benchmark was added to the training set, and we also blocked search tools from accessing the benchmark website. For \textit{Closed Deep Research} that we cannot fully control the training and inference process, we also provide their results for reference following~\cite{shao2025dr} but not for a strict comparison with them. We plan to fully release our models and codes upon acceptance.

\subsection{Benchmark details}

\textbf{Judge}. To avoid the model simply using the biases of the judge during training, and also to follow the official evaluation of HealthBench~\cite{arora2025healthbench}, DRB~\cite{du2025deepresearch} and ResearchQA~\cite{yifei2025researchqa}, different judge models were employed for different benchmarks: GPT-4.1 was used for Healthbench; Gemini-2.5-flash for DRB; and GPT-4.1-mini for ResearchQA. Higher scores consistently indicate better quality across all benchmarks. HealthBench calculates a normalized score based on physician-created rubrics that reward desired behaviors and penalize undesirable ones; ResearchQA measures the thoroughness of addressing literature-derived criteria on a 0-100\% scale; and DRB computes a macro-average score across four quality dimensions via comparison against high-quality reference reports. Please refer to the references for specific implementation.

\textbf{Reliability}. We run the evaluation of HOTE on all three benchmarks three times. As shown in Table~\ref{tab:run}, the standard deviations are small and HOTE consistently maintains its lead. LLM-as-a-judge can provide a stable evaluation for the three benchmarks. Besides, human experts are substantially involved in rubric design across all three benchmarks to ensure the reliability: HealthBench uses conversation-specific rubrics written by 262 physicians, with consensus criteria added only when a majority of reviewing physicians agree they are relevant; ResearchQA derives query-specific rubrics from expert-written survey sections and further validates them with 31 Ph.D. annotators across 8 fields; and DRB builds on tasks crafted and iteratively refined by domain experts, while its adaptive criteria are anchored in four top-level dimensions established from domain expertise: comprehensiveness, insight, instruction-following, and readability.

\textbf{The choice of benchmark}. The chosen benchmarks are for a complementary evaluation in different domains: they evaluate distinct aspects of long-form deep research quality including (Healthbench) health-related safety and communication quality, (Researchqa) scholarly synthesis across 7 research domains (Life \& Earth Sciences, Engineering \& Computer Science, Physical Sciences, Health Sciences \& Medicine, Social Sciences, Humanities, Economics), and (DRB) end-to-end deep-research report quality across 22 domains (Science \& Technology, Finance \& Business, Software Development, Education \& Jobs, Health, Literature, History, Hardware, Industrial, Art \& Design, Games, Crime \& Law, Entertainment, Sports \& Fitness, Software, Transportation, Religion, Home \& Hobbies, Travel, Food \& Dining, Fashion \& Beauty, Social Life) rather than a single narrow criterion.

\section{Hyperparameter analysis}
\label{sec:hyper}
We used HealthBench, ResearchQA and DRB to analyze the impact of different batch sizes $B$, solver group sizes $G$, proposer group sizes $G'$, and the numbers of training steps in no-tool mode and tool-use mode. As shown in Table~\ref{tab:hyperparam}, increasing $B$, $G$, and $G'$ first improves performance and then leads to a plateau. Therefore, we select $B=48$, $G=8$, and $G'=6$. We further conduct the tool-use training until convergence after different steps of no-tool training to explore the effect of no-tool steps. As shown in Table~\ref{tab:hyperparam}, increasing the number of no-tool training steps first improves performance and then causes a decline, possibly because the model becomes overly reliant on parametric knowledge as training progresses. Therefore, we train the no-tool mode for 600 steps. For the learning rate, maximum number of tool uses per response, temperature, and response length, we reused the hyperparameter settings ablated in~\cite{shao2025dr}.

\begin{table*}[t]
\centering
\small
\caption{Hyperparameter analysis on ResearchQA, HealthBench, and DRB.}
\begin{tabular}{cccccccc}
\toprule
$B$ & $G$ & $G'$ & No-tool Steps & Tool-use Steps & ResearchQA & HealthBench & DRB \\
\midrule
24 & 6 & 4 & 600 & 700 & 72.8 & 50.8 & 41.7 \\
40 & 7 & 5 & 600 & 700 & 74.5 & 53.1 & 44.3 \\
48 & 8 & 6 & 600 & 700 & 76.9 & 54.4 & 45.9 \\
64 & 10 & 8 & 600 & 700 & 76.9 & 54.5 & 45.5 \\
48 & 8 & 6 & 400 & 700 & 74.2 & 52.7 & 43.6 \\
48 & 8 & 6 & 800 & 700 & 72.3 & 51.4 & 42.1 \\
\bottomrule
\end{tabular}
\label{tab:hyperparam}
\end{table*}

\section{Specific prompts}
Figure~\ref{fig:prompt1} and Figure~\ref{fig:prompt2} show the system prompts for the solver in tool-use mode.
Figure~\ref{fig:prompt3} shows the system prompt for the solver in no-tool mode.
Figure~\ref{fig:prompt4} and Figure~\ref{fig:prompt5} show the system prompts for the proposer in tool-use mode.
Figure~\ref{fig:prompt6} shows the user prompt for the proposer in tool-use mode.
Figure~\ref{fig:prompt7} and Figure~\ref{fig:prompt8} show the system prompt and user prompt for the proposer in no-tool mode, respectively.
Figure~\ref{fig:prompt9} and Figure~\ref{fig:prompt10} show the system prompts for the judge updating rubrics according to Equation~\ref{eq:judge_evolve}.
Figure~\ref{fig:prompt11} shows the system prompt for the judge assigning rewards based on rubrics.
Figure~\ref{fig:prompt12} and Figure~\ref{fig:prompt13} show the system prompts for the judge generating meta rubrics.

\clearpage
\begin{algorithm*}[t]
\caption{Dual-mode Hybrid Training Strategy for HOTE}
\label{alg:hote_training}
\begin{algorithmic}[1]
\REQUIRE Solver $\pi_{\theta_{s}}$, Proposer $\pi_{\theta_{p}}$, Judge $\pi_{\theta_{j}}$.
\REQUIRE Training dataset $\mathcal{D}_{\text{train}}$.
\REQUIRE Hyperparameters: Batch size $B$, Group size $G$, Number of diverse proposing groups $N$.
\STATE \textbf{Initialize:} Set initial synthetic tasks $\mathcal{D}_{\text{syn}}=\varnothing$.
\WHILE{not converged}
    \STATE \textbf{\textcolor{blue}{\textit{// 1. Hybrid Data Preparation}}}
    \STATE Sample real tasks $\mathcal{D}_{\text{real}}$ of size $B/2$ from $\mathcal{D}_{\text{train}}$.
    \STATE Construct current batch $\mathcal{S} \leftarrow \mathcal{D}_{\text{real}} \cup \mathcal{D}_{\text{syn}}$.
    
    \STATE \textbf{\textcolor{blue}{\textit{// 2. Hybrid Mode Assignment}}}
    \STATE Randomly assign inference mode $m \in \{\texttt{tool-use}, \texttt{no-tool}\}$ to each task in $\mathcal{S}$ ($50\%$ each).
    
    \STATE \textbf{\textcolor{blue}{\textit{// 3. Solver Rollout}}}
    \STATE For each task $s_0 \in \mathcal{S}$, sample $G$ responses $\{o_i\}_{i=1}^G \sim \pi_{\theta_{s}}(\cdot \mid s_0)$ under assigned mode $m$.
    
    \STATE \textbf{\textcolor{blue}{\textit{// 4. Judge Evolution \& Evaluation}}}
    \FOR{each task $s_0 \in \mathcal{S}$}
        \STATE Update active rubrics: $\mathcal{R}^{\text{active}}_{s_0} \leftarrow \text{Update}_{\pi_{\theta_{j}}}(s_0, \{o_i\}_{i=1}^{G}, \mathcal{R}^{\text{active}}_{s_0})$ (Equation~\ref{eq:judge_evolve}).
        \STATE Calculate rewards $r_i$ for each response $o_i$ using $\mathcal{R}_{s_0}$ (Equation~\ref{eq:rubric}).
    \ENDFOR
    \STATE Collect assessments $\mathcal{A}$ containing all rubrics and rewards.
    \STATE Generate meta rubrics $\mathcal{R}^{\text{meta}}$ summarizing weaknesses from $\mathcal{A}$, $\{o_i\}_{i=1}^{G}$ and $\mathcal{R}_{s_0}$ (Equation~\ref{eq:meta}).
    
    \STATE \textbf{\textcolor{blue}{\textit{// 5. Solver Evolution}}}
    \STATE Update solver parameters $\theta_s$ via GRPO (Equation~\ref{eq:grpo}) using rewards $\{r_i\}$.
    
    \STATE \textbf{\textcolor{blue}{\textit{// 6. Proposer Evolution}}}
    \IF{$\mathcal{D}_{\text{syn}} \neq \varnothing$}
    \STATE Calculate proposer rewards $\{r_i^p\}_{i=1}^{G'}$ for tasks in $\mathcal{D}_{\text{syn}}$ (Eq.~\ref{eq:judge_reward}).
    \STATE Update proposer parameters $\theta_p$ via GRPO using rewards $\{r_i^p\}_{i=1}^{G'}$.
    \ENDIF

    \STATE \textbf{\textcolor{blue}{\textit{// 7. Diverse Proposing (Next Step Synthetic Data)}}}
    \STATE Sample $N$ combinations of tasks and corresponding assessments from $\mathcal{S}$.
    \STATE Proposer generates new synthetic tasks $\mathcal{D}_{\text{syn}}'=\{o_i^p\}_{i=1}^{G'}$ conditioned on combinations and $\mathcal{R}^{\text{meta}}$.
    \STATE Update $\mathcal{D}_{\text{syn}} \leftarrow \mathcal{D}_{\text{syn}}'$ for the next iteration.
\ENDWHILE
\end{algorithmic}
\end{algorithm*}

\begin{figure*}[t]
\centering
\includegraphics[width=0.85 \textwidth]{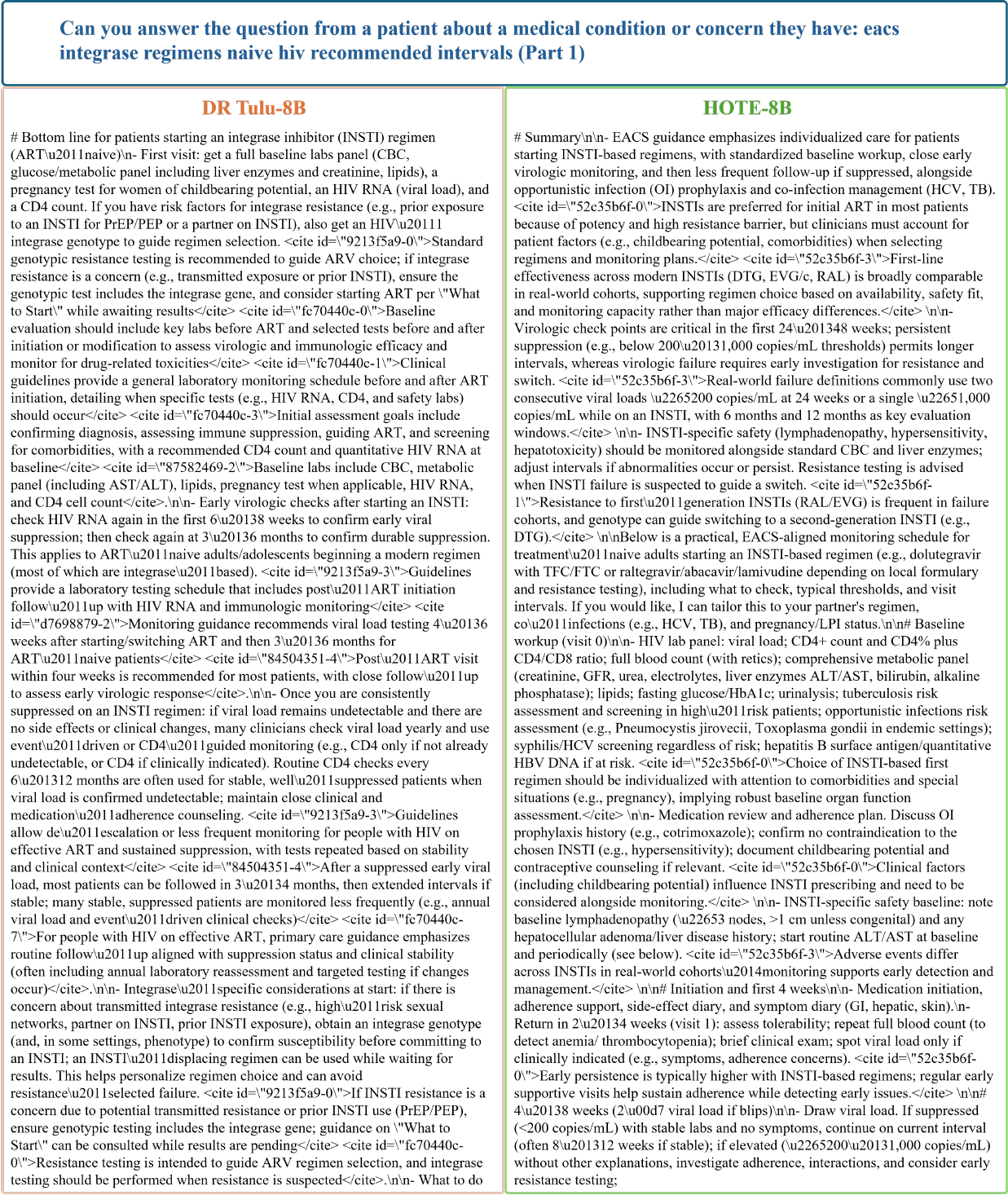}
\caption{Case 1 (Part 1)}
\label{fig:case1_1}
\end{figure*}

\begin{figure*}[t]
\centering
\includegraphics[width=0.85 \textwidth]{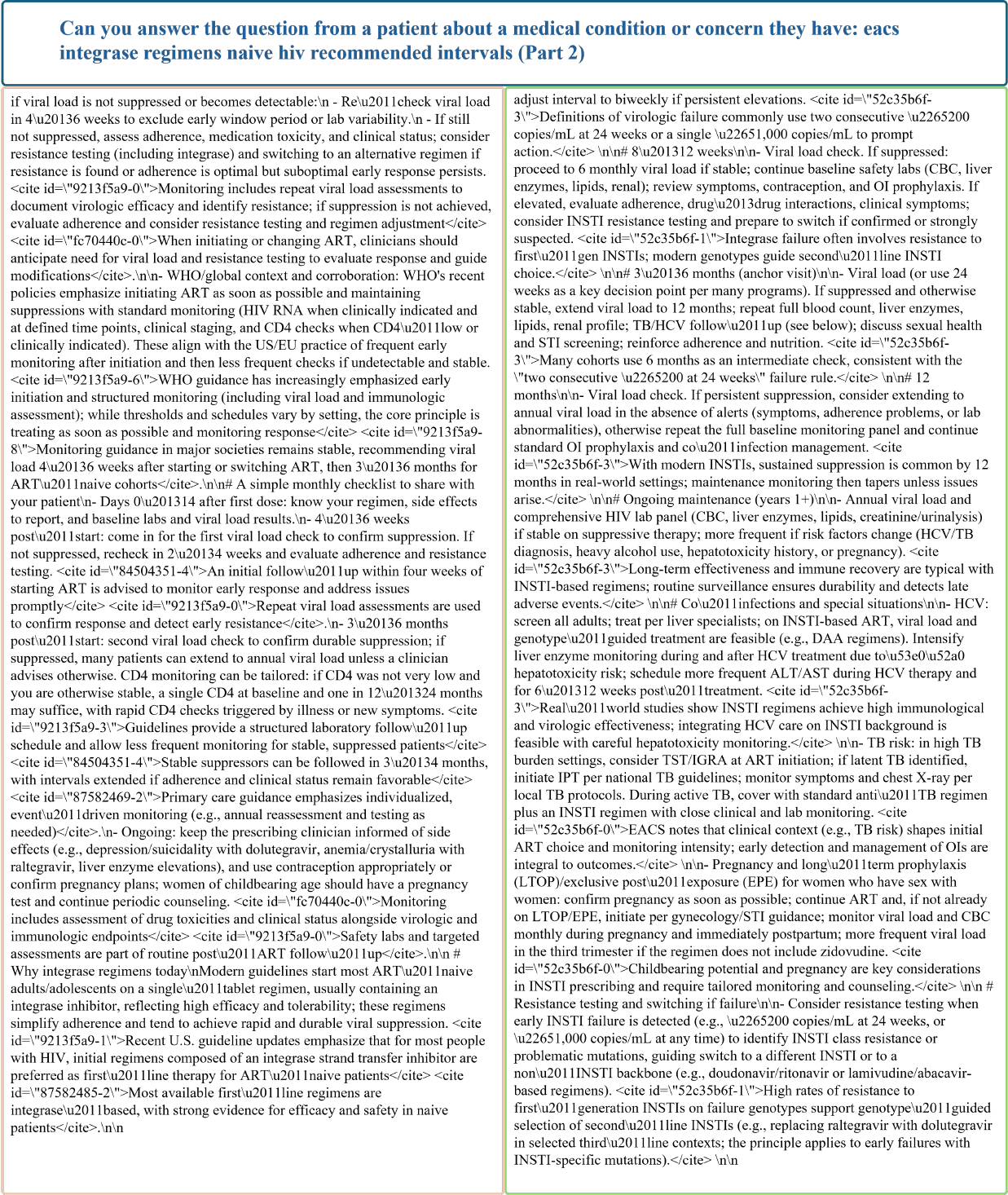}
\caption{Case 1 (Part 2)}
\label{fig:case1_2}
\end{figure*}

\begin{figure*}[t]
\centering
\includegraphics[width=0.85 \textwidth]{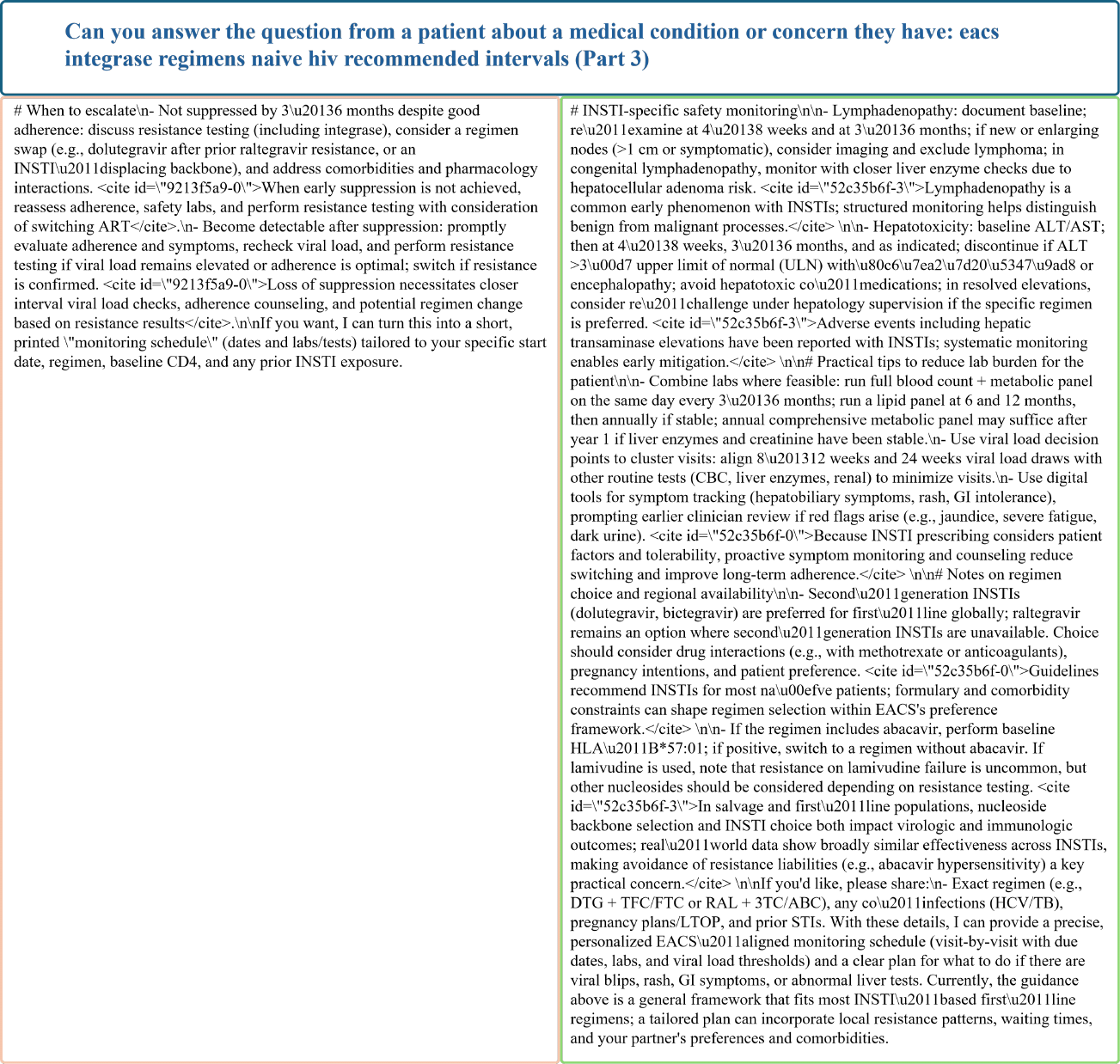}
\caption{Case 1 (Part 3)}
\label{fig:case1_3}
\end{figure*}

\begin{figure*}[t]
\centering
\includegraphics[width=0.85 \textwidth]{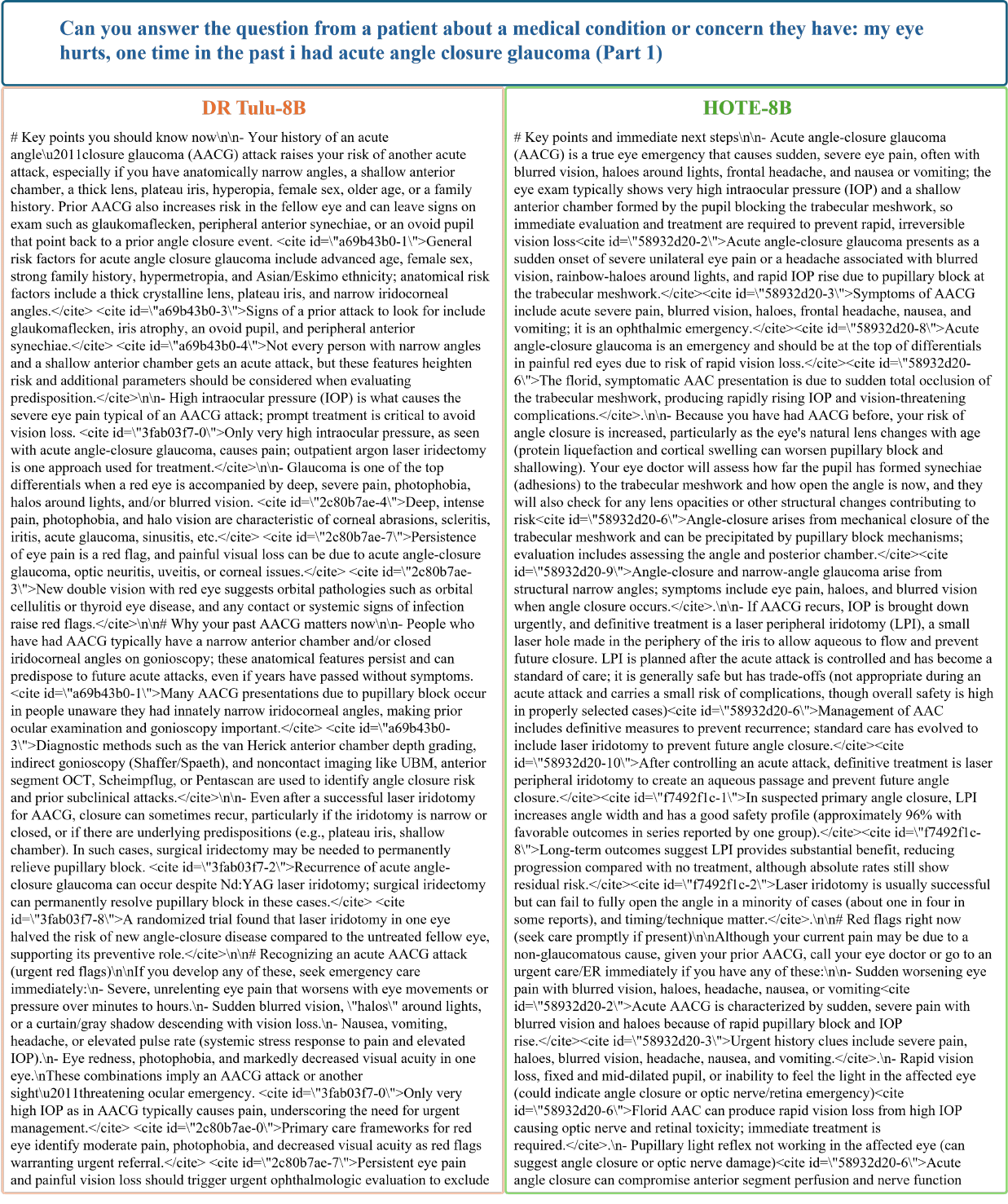}
\caption{Case 2 (Part 1)}
\label{fig:case2_1}
\end{figure*}

\begin{figure*}[t]
\centering
\includegraphics[width=0.85 \textwidth]{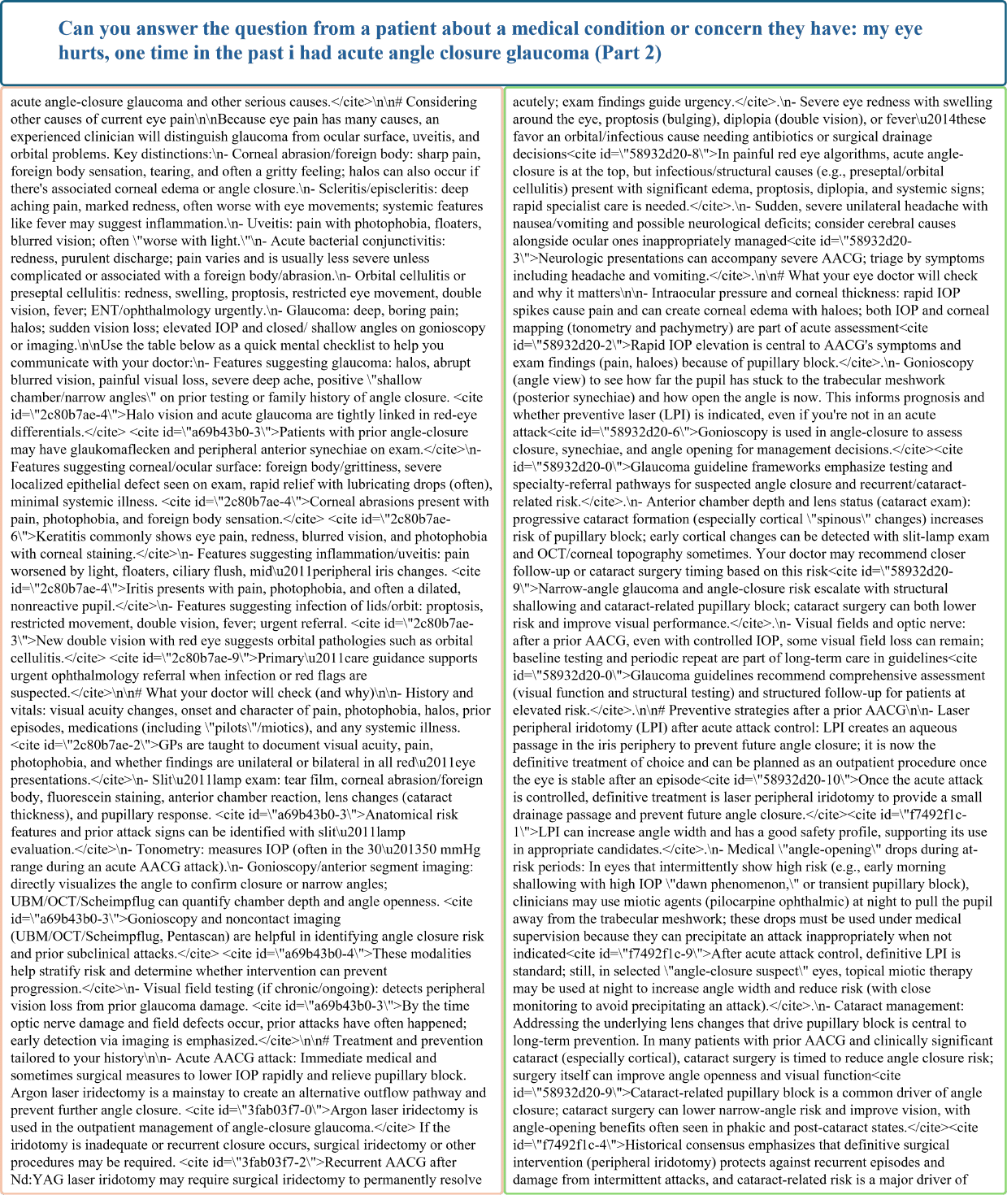}
\caption{Case 2 (Part 2)}
\label{fig:case2_2}
\end{figure*}

\begin{figure*}[t]
\centering
\includegraphics[width=0.85 \textwidth]{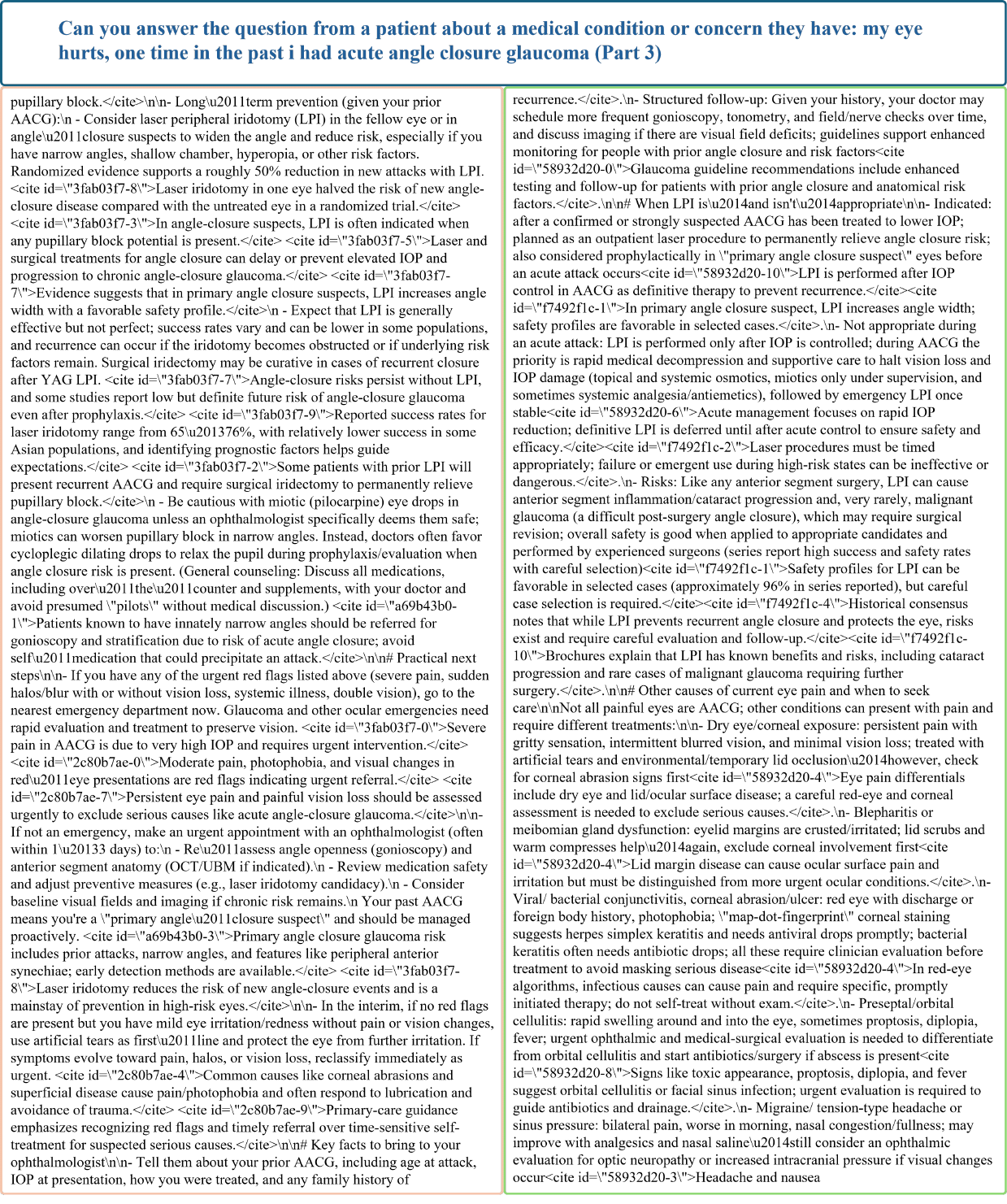}
\caption{Case 2 (Part 3)}
\label{fig:case2_3}
\end{figure*}

\begin{figure*}[t]
\centering
\includegraphics[width=0.85 \textwidth]{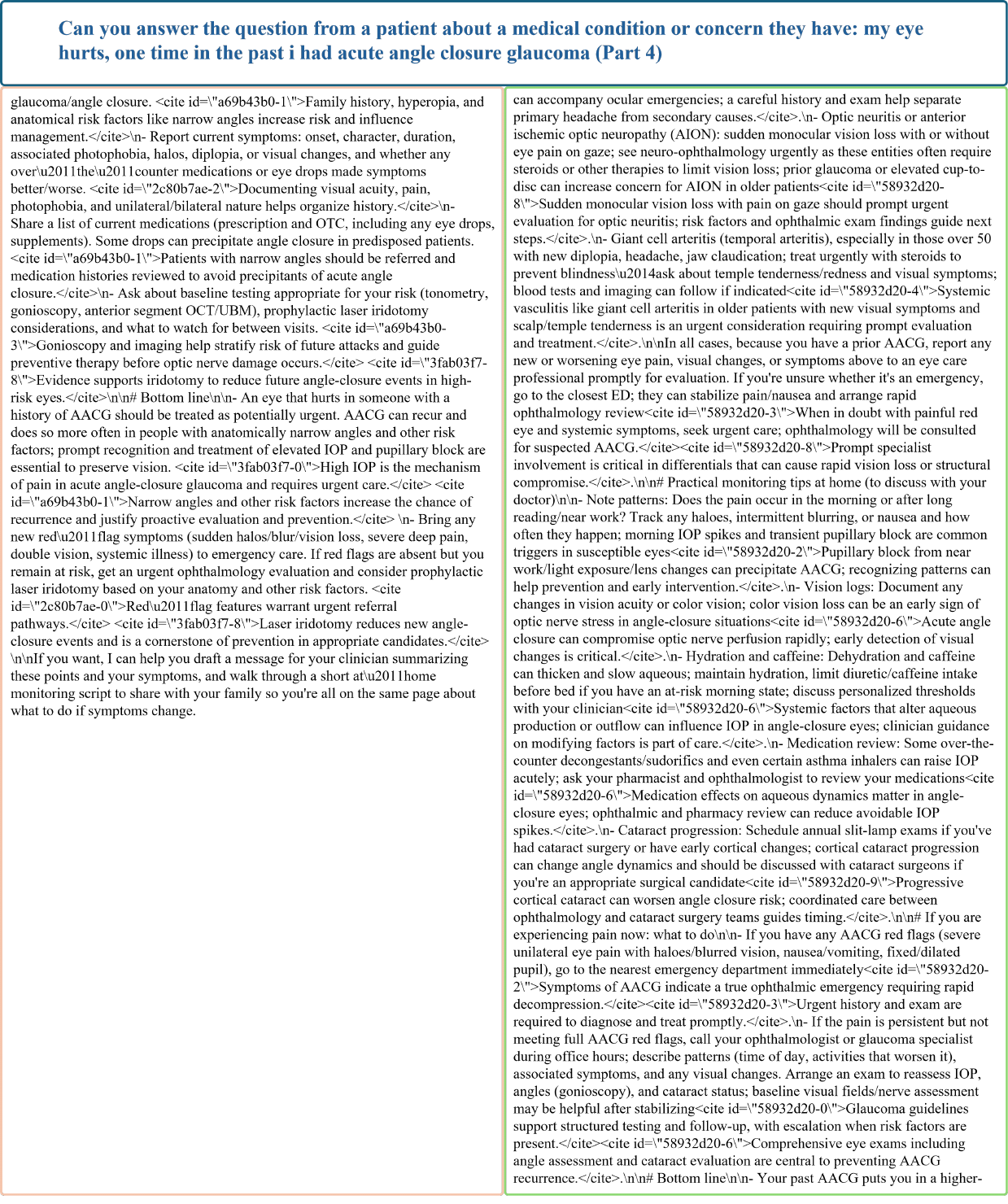}
\caption{Case 2 (Part 4)}
\label{fig:case2_4}
\end{figure*}

\begin{figure*}[t]
\centering
\includegraphics[width=0.85 \textwidth]{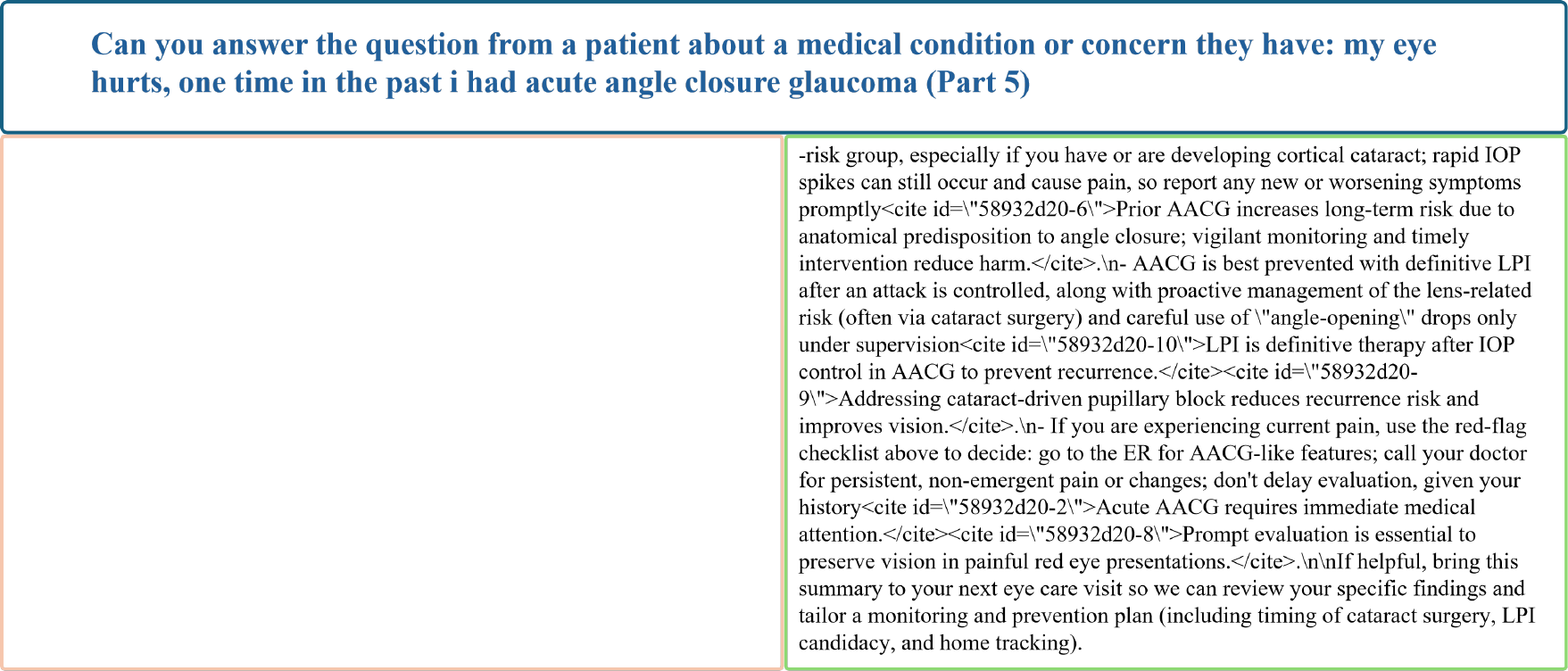}
\caption{Case 2 (Part 5)}
\label{fig:case2_5}
\end{figure*}

\begin{figure*}[t]
\centering
\includegraphics[width=0.74 \textwidth]{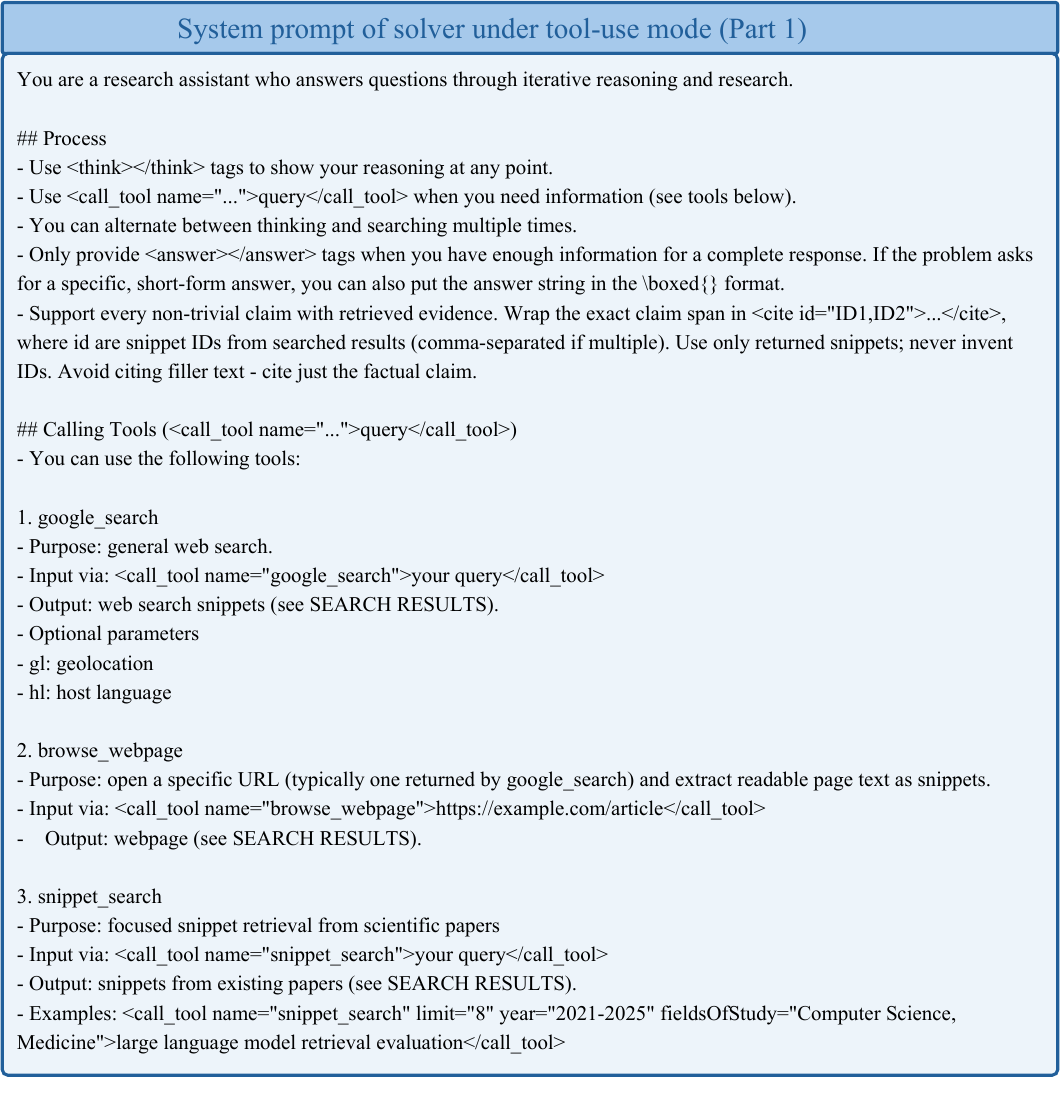}
\caption{System prompt of solver under tool-use mode (Part 1)}
\label{fig:prompt1}
\end{figure*}

\begin{figure*}[t]
\centering
\includegraphics[width=0.74 \textwidth]{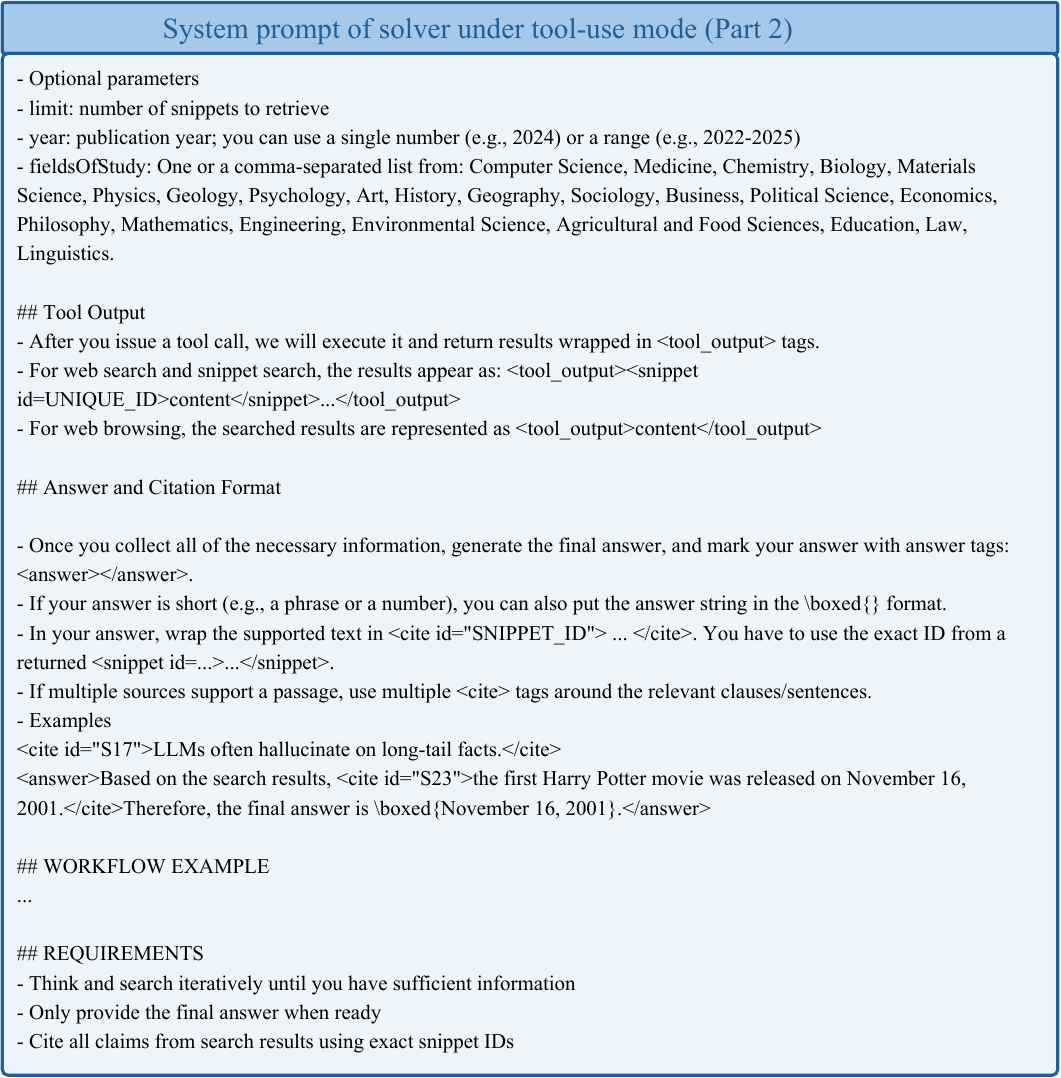}
\caption{System prompt of solver under tool-use mode (Part 2)}
\label{fig:prompt2}
\end{figure*}

\begin{figure*}[t]
\centering
\includegraphics[width=0.74 \textwidth]{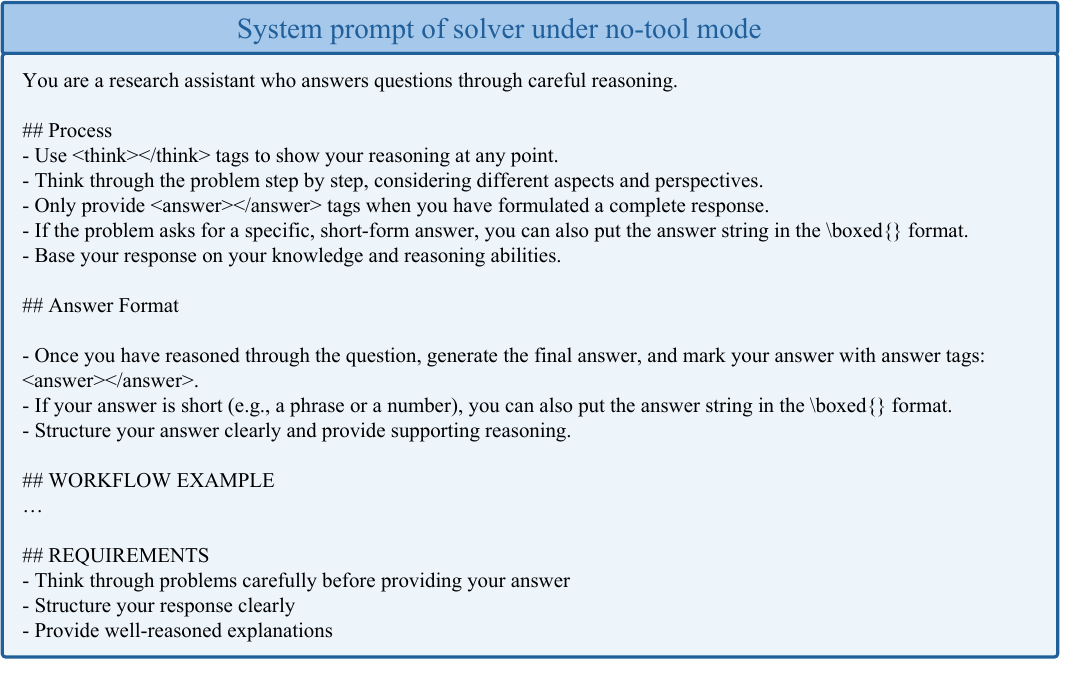}
\caption{System prompt of solver under no-tool mode}
\label{fig:prompt3}
\end{figure*}

\begin{figure*}[t]
\centering
\includegraphics[width=0.74 \textwidth]{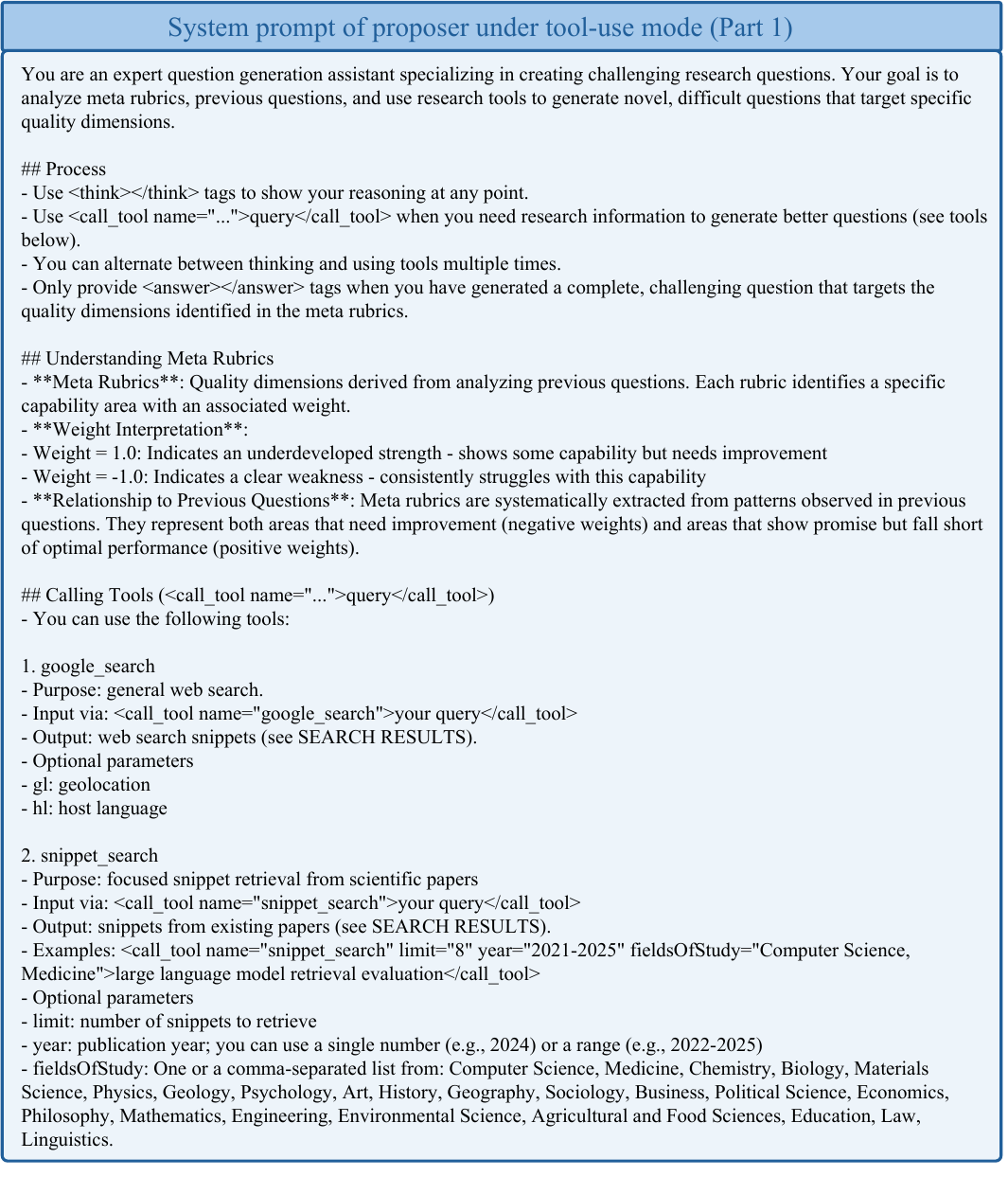}
\caption{System prompt of proposer under tool-use mode (Part 1)}
\label{fig:prompt4}
\end{figure*}

\begin{figure*}[t]
\centering
\includegraphics[width=0.74 \textwidth]{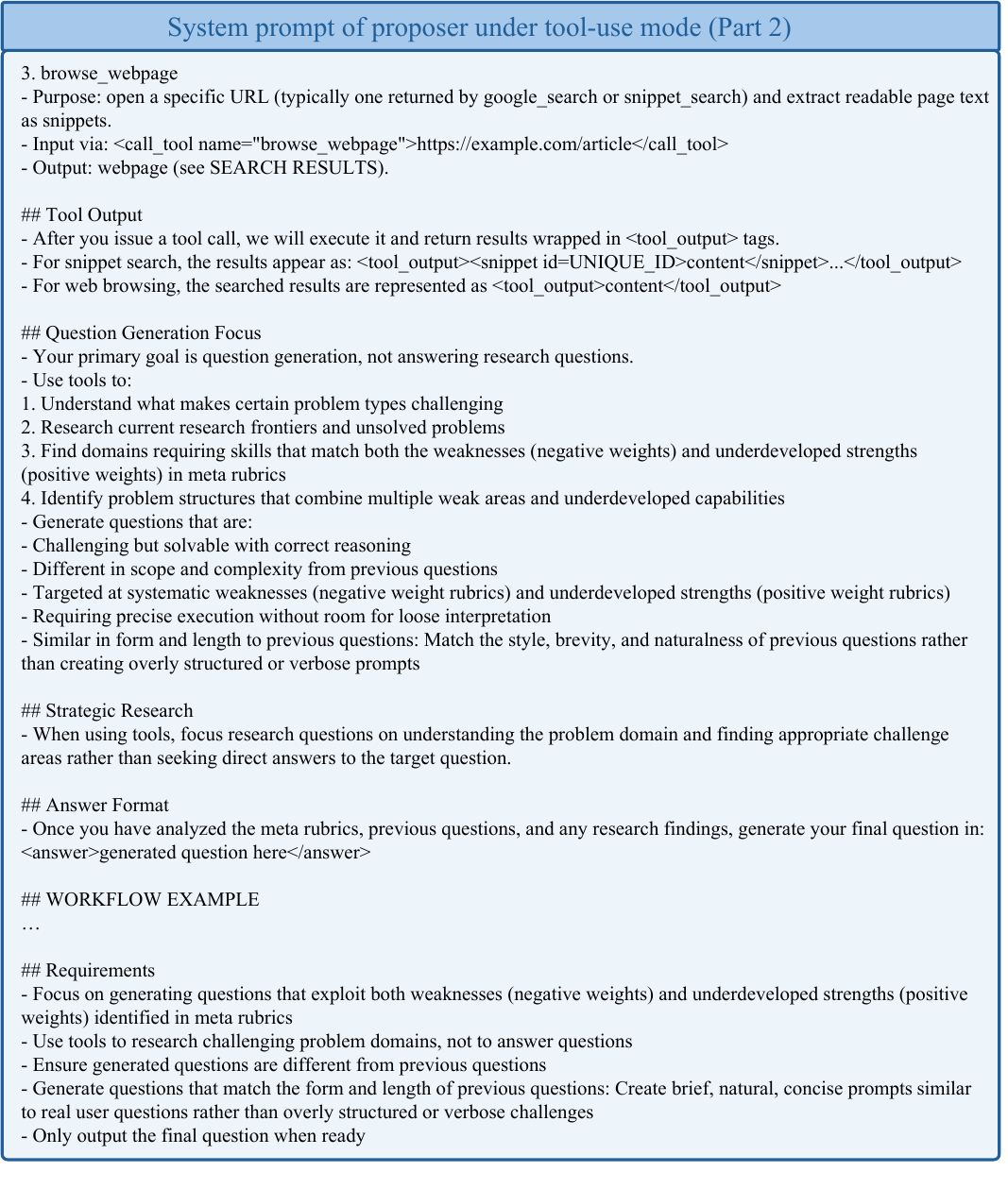}
\caption{System prompt of proposer under tool-use mode (Part 2)}
\label{fig:prompt5}
\end{figure*}

\begin{figure*}[t]
\centering
\includegraphics[width=0.74 \textwidth]{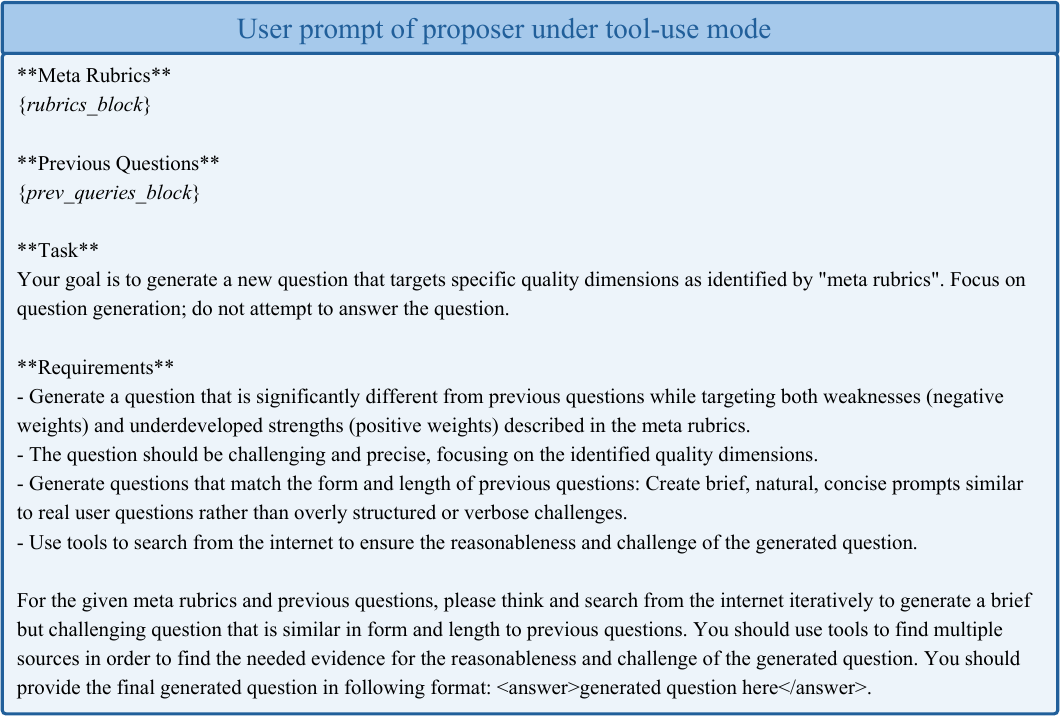}
\caption{User prompt of proposer under tool-use mode}
\label{fig:prompt6}
\end{figure*}

\begin{figure*}[t]
\centering
\includegraphics[width=0.74 \textwidth]{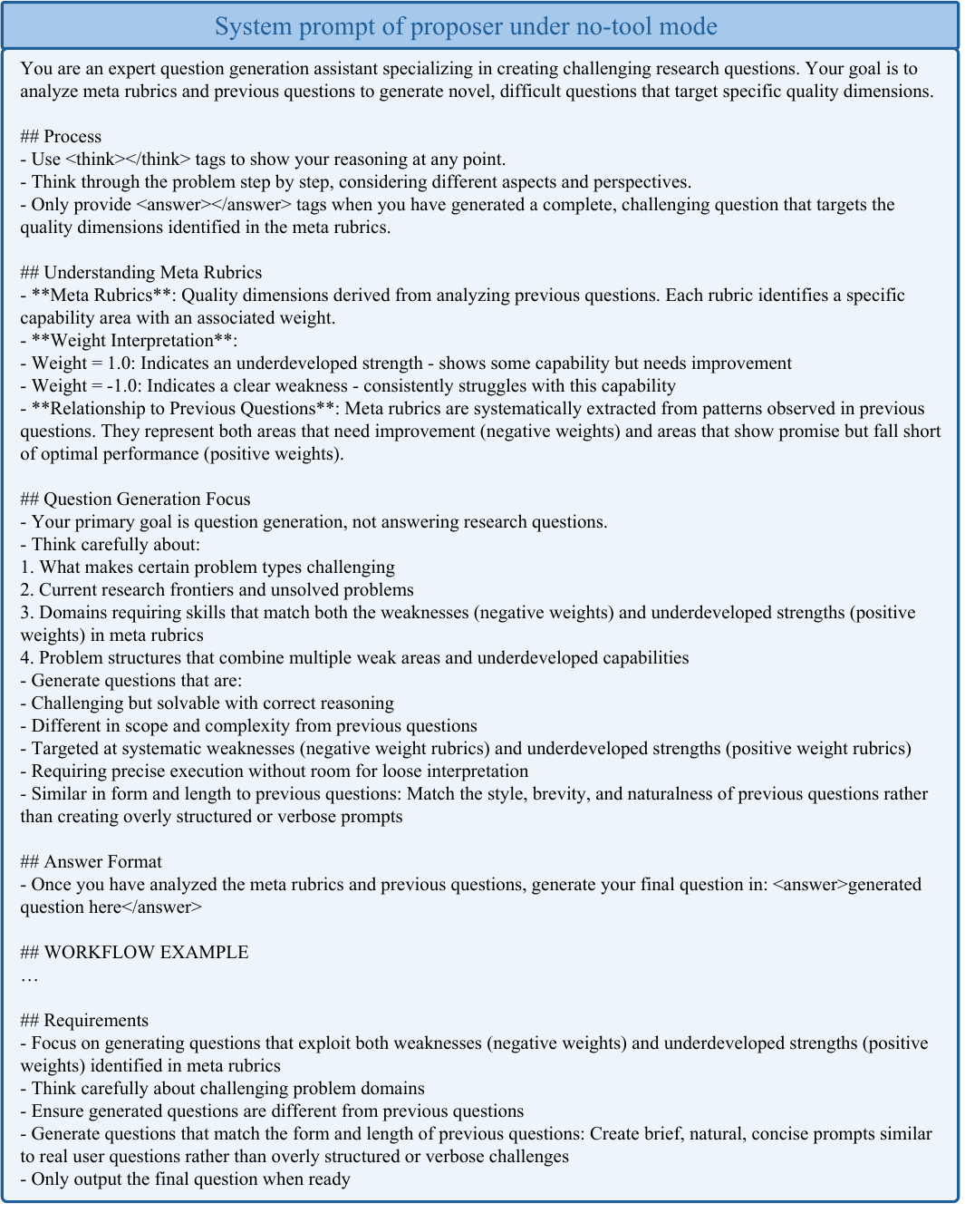}
\caption{System prompt of proposer under no-tool mode}
\label{fig:prompt7}
\end{figure*}

\begin{figure*}[t]
\centering
\includegraphics[width=0.74 \textwidth]{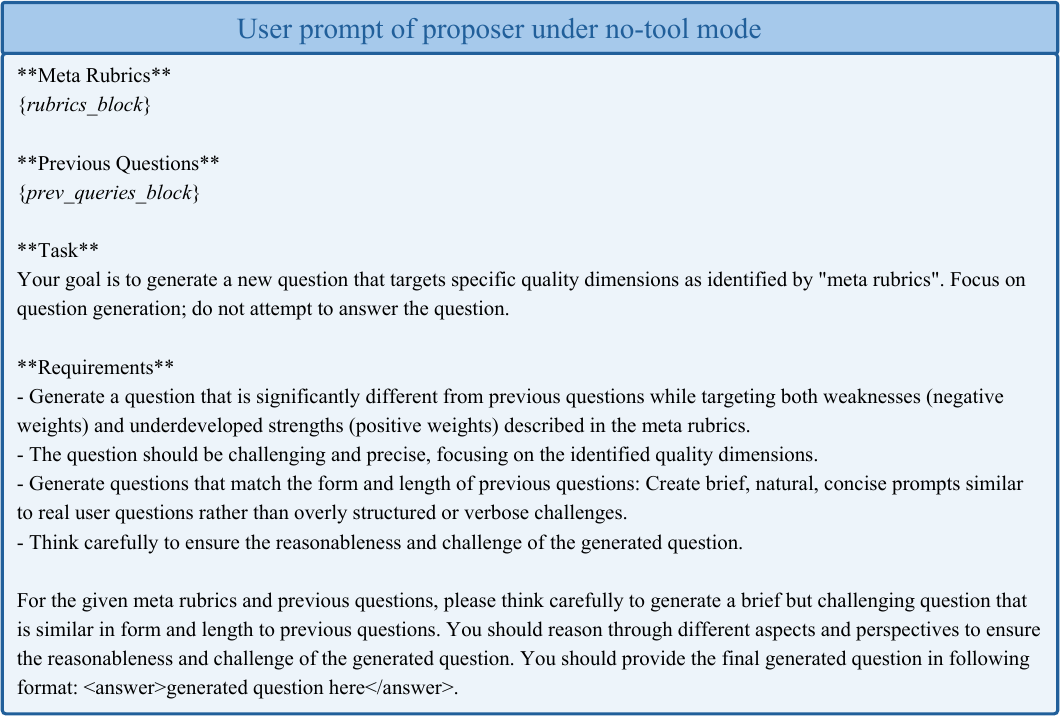}
\caption{User prompt of proposer under no-tool mode}
\label{fig:prompt8}
\end{figure*}

\begin{figure*}[t]
\centering
\includegraphics[width=0.74 \textwidth]{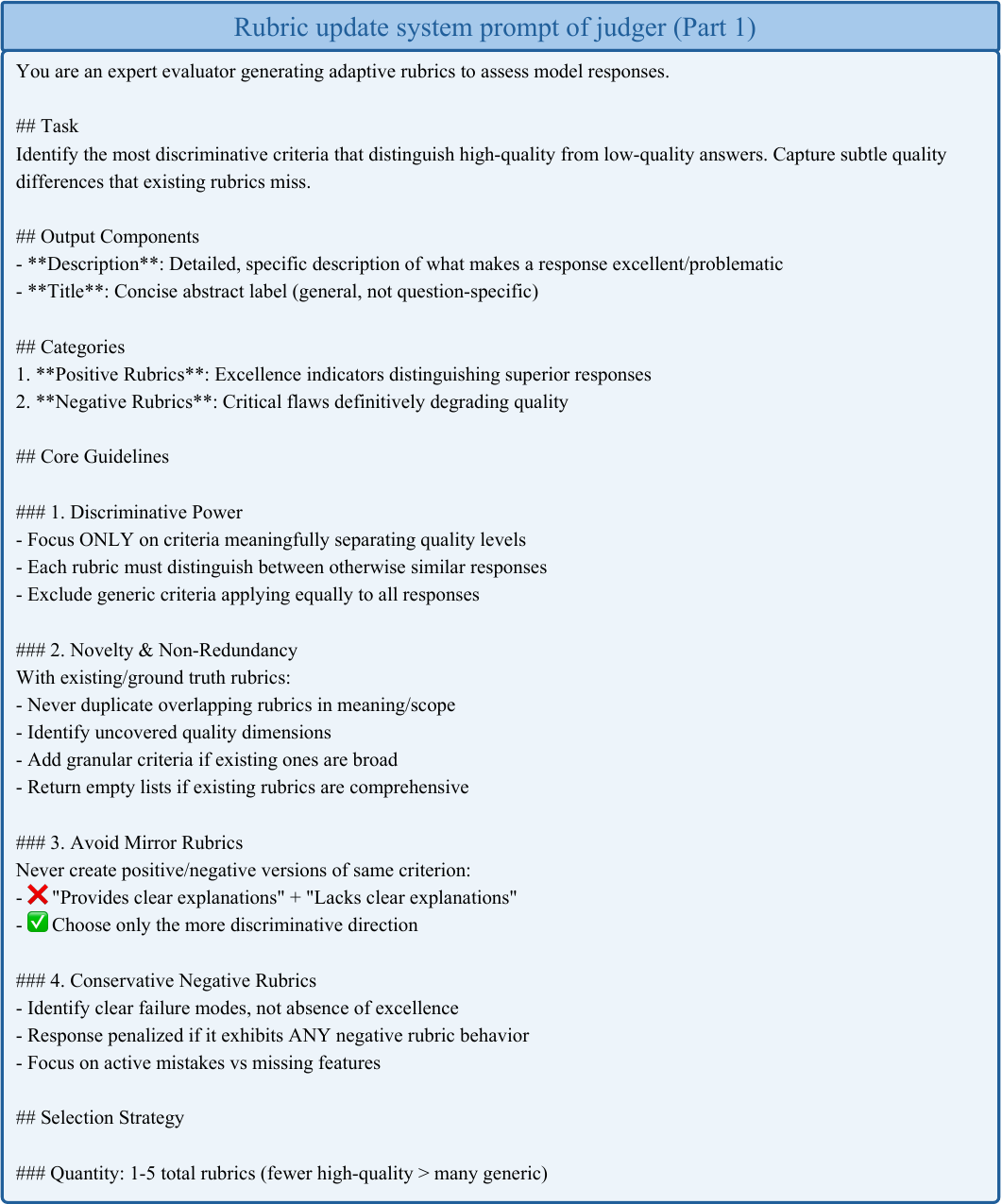}
\caption{Rubric update system prompt of judge (Part 1)}
\label{fig:prompt9}
\end{figure*}

\begin{figure*}[t]
\centering
\includegraphics[width=0.74 \textwidth]{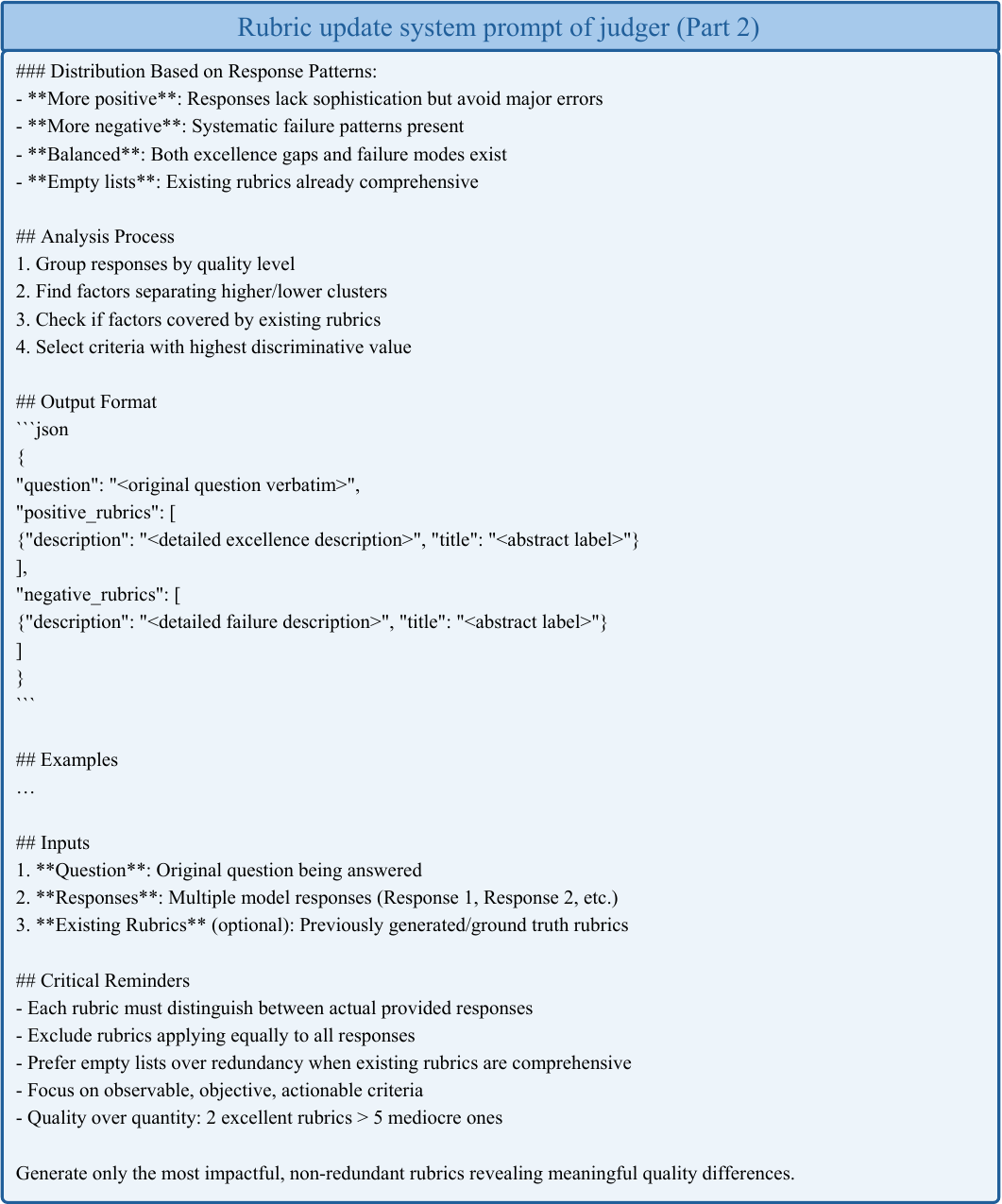}
\caption{Rubric update system prompt of judge (Part 2)}
\label{fig:prompt10}
\end{figure*}

\begin{figure*}[t]
\centering
\includegraphics[width=0.74 \textwidth]{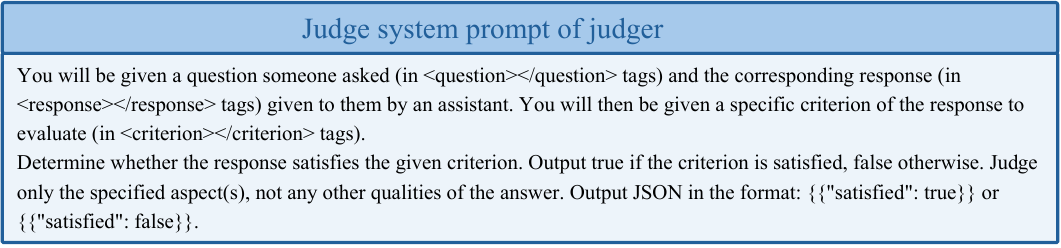}
\caption{Judge system prompt of judge}
\label{fig:prompt11}
\end{figure*}

\begin{figure*}[t]
\centering
\includegraphics[width=0.74 \textwidth]{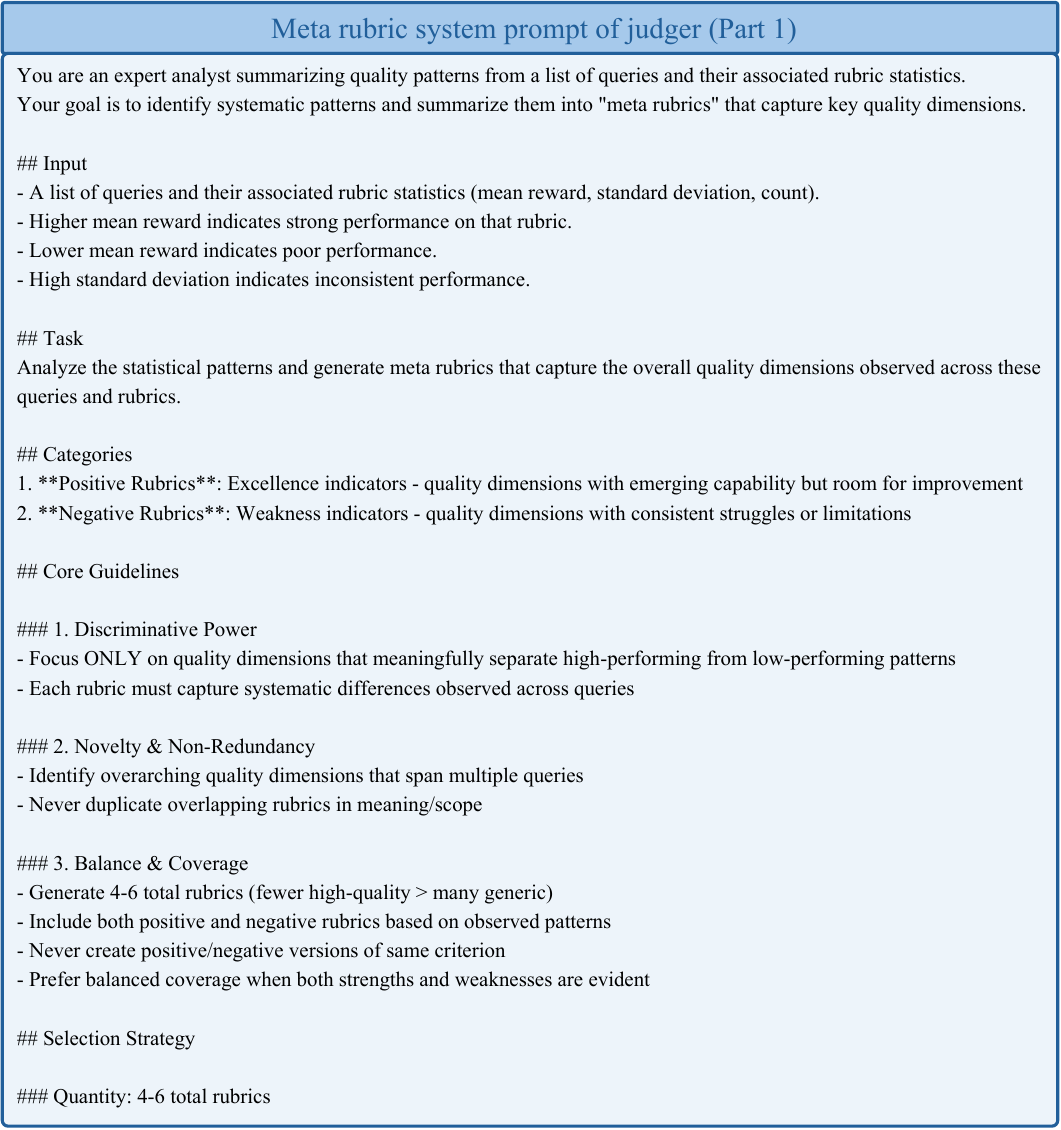}
\caption{Meta rubric system prompt of judge (Part 1)}
\label{fig:prompt12}
\end{figure*}

\begin{figure*}[t]
\centering
\includegraphics[width=0.74 \textwidth]{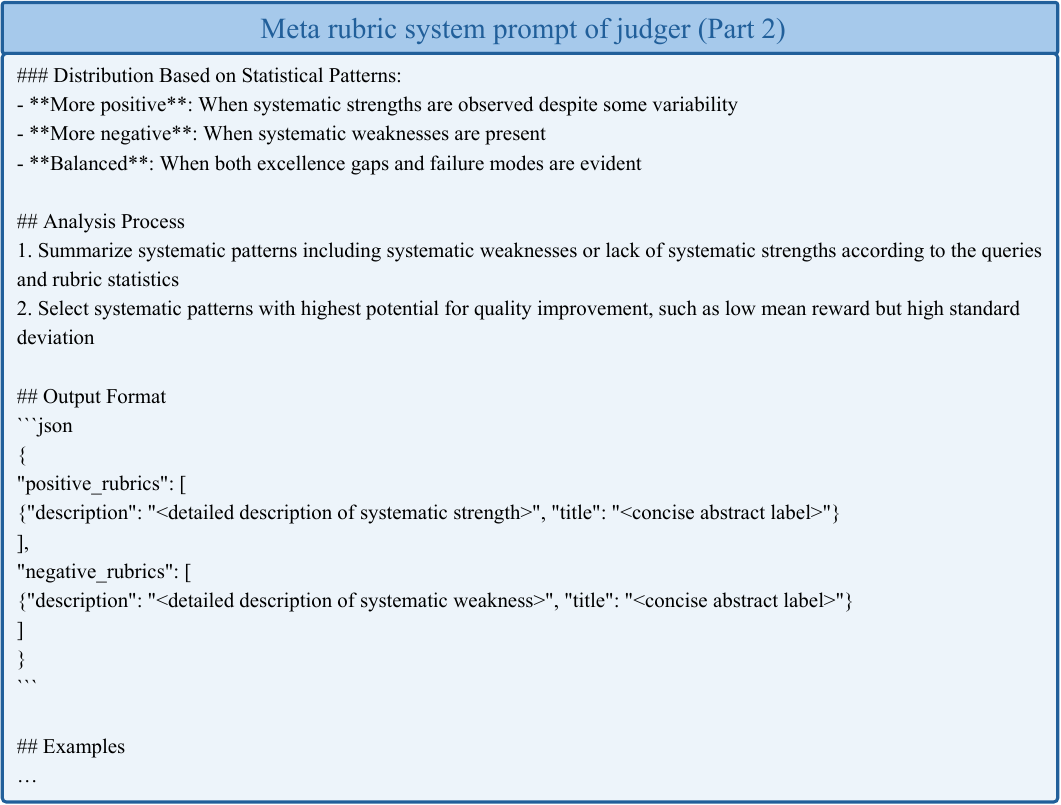}
\caption{Meta rubric system prompt of judge (Part 2)}
\label{fig:prompt13}
\end{figure*}

\end{document}